\newtheorem{thm}{Theorem}
\newcounter{ToDo}
\newcounter{gaocomm}
\newcounter{Note}
\definecolor{blue-violet}{rgb}{0.54, 0.17, 0.89}
\definecolor{mygreen}{rgb}{0.0, 0.5, 0.0}
\definecolor{awesome}{rgb}{1.0, 0.13, 0.32}
\definecolor{bostonuniversityred}{rgb}{0.8, 0.0, 0.0}
\title{Adaptive Multi-level Hyper-gradient Descent}
\author[1]{Renlong Jie}
\author[1]{Junbin Gao}
\author[1]{Andrey Vasnev}
\author[1]{Minh-Ngoc Tran}
\affil[1]{The University of Sydney\\
Camperdown NSW 2006}
\title{Adaptive Multi-level Hyper-gradient Descent}
\begin{document}

\maketitle

\begin{abstract}
Adaptive learning rates can lead to faster convergence and better final performance for deep learning models. There are several widely known human-designed adaptive optimizers such as Adam and RMSProp, gradient based adaptive methods such as hyper-descent and L4, and meta learning approaches including learning to learn. However, the issue of balancing adaptiveness and over-parameterization is still a topic to be addressed. In this study, we investigate different levels of learning rate adaptation based on the framework of hyper-gradient descent, and further propose a method that adaptively learns the model parameters for combining different levels of adaptations. Meanwhile, we show the relationship between adding regularization on over-parameterized learning rates and building combinations of different levels of adaptive learning rates. The experiments on several network architectures including feed-forward networks, LeNet-5, ResNet-18 and ResNet-34 show that the proposed multi-level adaptive approach can outperform baseline adaptive methods in a variety circumstances with statistical significance.
\end{abstract}

\section{Introduction}\label{Sec:5.1}

The basic optimization algorithm for training deep neural networks is gradient descent method (GD),
including stochastic gradient descent (SGD), mini-batch gradient descent and batch gradient descent. Model parameters are updated according to the first-order gradients of the empirical risks with respect to the parameters being optimized, while back-propagation is implemented for calculating the gradients of parameters \citep{ruder2016overview}.
Na\"{i}ve gradient descent methods apply fixed learning rates without any adaptation mechanisms. However, considering the change of available information during the learning process, SGD with fixed learning rates can result in inefficiency and a waste of computing resources in hyper-parameter searching. One solution is to introduce adaptive updating rules, while the learning rates are still fixed in training. This leads to the proposed methods include AdamGrad \citep{duchi2011adaptive}, RMSProp \citep{tieleman2012rmsprop}, and Adam \citep{kingma2014adam}. Also there are optimizers aiming at addressing the convergence issue in Adam \citep{reddi2019convergence, luo2019adaptive}, or rectify the variance of the adaptive learning rate \citep{liu2019variance}. Other techniques such as lookahead could also achieve variance reduction and stability improvement with negligible extra computational cost \citep{zhang2019lookahead}.
\newline\newline
Even though the adaptive optimizers with fixed learning rates can converge faster than SGD in a wide range of tasks, the updating rules are designed manually, while more hyper-parameters are introduced. Another idea is to use the information of objective function and to update the learning rates as trainable parameters. This set of methods was introduced as automatic differentiation, where the hyper-paremeters can be optimized with backpropagation \citep{maclaurin2015gradient, baydin2018automatic}. As gradient-based hyper-parameter optimization methods, they can be implemented as an online approach \citep{franceschi2017forward}. With the idea of auto-differentiation, learning rates can be updated in real time with the corresponding derivatives of the empirical risk \citep{almeida1998parameter}, which can be generated to all types of optimizers for deep neural networks \citep{baydin2017online}. Another step size adaptation approach called ``L4'' is based on the linearized expansion of the loss function, which focuses on minimizing the need of learning rate tunning with strong reproducible performance across multiple different architectures \citep{rolinek2018l4}. Further more, by addressing the issue of poor generalization performance of adaptive methods, dynamic bound for gradient methods was introduced to build a gradual transition between adaptive method and SGD \citep{luo2019adaptive}. 
\newline\newline
Another set of approaches train an RNN (recurrent neural network) agent to generate the optimal learning rates in the next step given the historical training information, which is known as ``learning to learn'' \citep{andrychowicz2016learning}. It empirically outperforms hand-designed optimizers in a variety of learning tasks, but another study shows that it may not be effective for long horizon \citep{lv2017learning}. The generalization ability can be improved by using meta training samples and hierachical LSTMs (Long Short-Term Memory) \citep{wichrowska2017learned}. Still there are studies focusing on incorporating domain knowledge with LSTM-based optimizers to improve the performance in terms of efficacy and efficiency \citep{fu2017neural}.
\newline\newline
The limitations of existing algorithms are mainly in the following two aspects: (a) The proposed hyper-descent only focuses on the case of global adaptation of learning rates. Even though the original paper mentions that their approach can be generalized to the case where the learning rate is an vector, it is still necessary to investigate whether different levels of parameterization could make a difference in model performance as well as training efficiency. (b) No constraints or prior knowledge for learning rates are introduced in the framework of hyper-descent, which could be essential in resolving the issue of over-parameterization when a large number of independent learning rates need to be optimized.
\newline\newline
In this study, we propose an algorithm based on existing works on hyper-descent but extend it to layer-wise, unit-wise and parameter-wise learning rates adaptation. In addition, we introduce a set of regularization techniques for learning rates for the first time to address the balance of global and local adaptation, which is also helpful in solving the issue of over-parameterization as a large number of learning rates are being learned. Although these regularizers indicate that extra hyper-parameters need to be optimized, the model performance after training could be improved with this setting in a large range of tasks. The main contribution of our study can be summarized as by the following three items:

\begin{itemize}
\item We propose an algorithm based on existing works on hyper-gradient descent but extend it to layer-wise, unit-wise and parameter-wise learning rates adaptations.
\item We introduce a set of regularization techniques for learning rates for the first time to address the balance of global and local adaptation, which is also helpful in controlling over-parameterization as a large number of learning rates are being learned.
\item We propose an algorithm for implementing the combination of adaptive learning rates in different levels for model parameter updating.
\end{itemize}
 
The structure of this chapter is organized as follows: Section~\ref{Sec:5.2} summarizes 
the related works on auto-differentiation, especially the hyper-descent (HD) algorithms. Section~\ref{Sec:5.3} explains the method implemented in extending the existing works. Section~\ref{Sec:5.4} shows the results of experiments on different learning tasks with a variety of models. Section~\ref{Sec:5.5}  discusses   the validity of the experiment results and Section~\ref{Sec:5.6} concludes the study.  

\section{Related work} \label{Sec:5.2}
This section is dedicated to reviewing the auto-differentiation and hyper-descent with detailed explanation and math formulas. In the original study of hyper-gradient descent\citep{baydin2017online}, the gradient with respect to the learning rate is calculated by using the updating rule of the model parameters in the last iteration. The gradient descent updating rule for model parameter $\theta$ can is given by Eq.~\eqref{Eq:5.1}:
\begin{equation}
   \theta_t = \theta_{t-1} - \alpha\nabla f(\theta_{t-1})
   \label{Eq:5.1}
\end{equation}
Note that $\theta_{t-1}=\theta_{t-2}-\alpha\nabla f(\theta_{t-2})$, the gradient of objective function with respect to learning rate can then by calculated:
\begin{equation}
\begin{split}
\frac{\partial f(\theta_{t-1})}{\partial \alpha} &= \nabla f(\theta_{t-1})\cdot \frac{\partial (\theta_{t-2}-\alpha\nabla f(\theta_{t-2}))}{\partial \alpha}\\
&= \nabla f(\theta_{t-1})\cdot(-\nabla f(\theta_{t-2}))
\end{split}
\end{equation}
A whole learning rate updating rule can be written as:
\begin{equation}
\begin{split}
    \alpha_t &= \alpha_{t-1}-\beta \frac{\partial f(\theta_{t-1})}{\partial \alpha}\\
    &= \alpha_{t-1}+\beta \nabla f(\theta_{t-1})\cdot \nabla f(\theta_{t-2})
    \end{split}
\end{equation}
In a more general prospective, assume that we have an updating rule for model parameters $\theta_t = u(\Theta_{t-1}, \alpha_t)$. We need to update the value of $\alpha_t$ towards the optimum value $\alpha^{*}_t$ that minimizes the expected value of the objective in the next iteration. The corresponding gradient can be written as:
\begin{equation}
    \frac{\partial \mathbb{E}[f(\theta_{t})]} {\partial \alpha_t} = \frac{\partial \mathbb{E}[f \circ u(\Theta_{t-1}, \alpha_t)]}{\partial \alpha_t} = \mathbb{E}[\nabla_{\theta} f(\theta_t)^T \nabla_{\alpha}u(\Theta_{t-1}, \alpha_t)], 
\end{equation}
where $u(\Theta_{t-1}, \alpha_t)$ denotes the updating rule of a gradient descent method. Then the additive updating rule of learning rate $\alpha_t$ can be written as:
\begin{equation}
\alpha_t = \alpha_{t-1} - \beta \tilde{\nabla}_{\theta} f(\theta_{t-1})^T \nabla_{\alpha}u(\Theta_{t-2}, \alpha_{t-1}), \label{Eq:5.2}
\end{equation}
where $\tilde{\nabla}_{\theta} f(\theta_t)$ is the noisy estimator of $\nabla_{\theta} f(\theta_t)$. On the other hand, the multiplicative rule is given by:
\begin{equation}
    \alpha_t = \alpha_{t-1}\left(1 - \beta^{'}\frac{\tilde{\nabla}_{\theta} f(\theta_{t-1})^T \nabla_{\alpha}u(\Theta_{t-2}, \alpha_{t-1})}{\left\lVert \tilde{\nabla}_{\theta} f(\theta_{t-1}) \right\rVert \left\lVert \nabla_{\alpha}u(\Theta_{t-2}, \alpha_{t-1})\right\rVert}\right). \label{Eq:5.3a}
\end{equation}
These two types of updating rules can be implemented in any optimizers including SGD and Adam, denoted by   corresponding $\theta_t = u(\Theta_{t-1}, \alpha_t)$. 

\section{Multi-level adaptation methods} \label{Sec:5.3}
In this study we propose a combination form of adaptive learning rates, where the final learning rate applied for model parameter updating is the weighted combination of different level of adaptive learning rates, while the combination weights can also be trained with back-propagation. This give the similar effect with adding regularization on learning rates with certain kind of baselines. First we introduce the learning rate adaptation in different levels.

\subsection{Layer-wise, unit-wise and parameter-wise adaptation} \label{Sec:5.3.1}
In the paper of hyper-descent\citep{baydin2017online}, the learning rate is set to be a scalar. However, to make the most of learning rate adaptation, in this study we introduce layer-wise or even parameter-wise updating rules, where the learning rate $\boldsymbol{\alpha}_t$ in each time step  is considered to be a vector (layer-wise) or even a list of matrices (parameter-wise). For the sake of simplicity, we collect all the learning rates in  a vector: $\boldsymbol{\alpha}_t = (\alpha_1, ..., \alpha_N)^T$. 
Correspondingly, the objective   $f(\boldsymbol{\theta})$ is a function of $\boldsymbol{\theta} = (\theta_1, \theta_2, ..., \theta_N)^T$, collecting all the model parameters. In this case, the derivative of the objective function $f$ with respect to each learning rate can be written as:
\begin{equation}
\begin{split}
    \frac{\partial f(\boldsymbol{\theta}_{t-1})}{\partial \alpha_{i,t-1}} &= \frac{\partial f(\theta_{1, t-1}, ..., \theta_{i, t-1}, ..., \theta_{n, t-1})}{\partial \alpha_{i,t-1}}\\
    & = \sum^N_{j=1} \frac{\partial f(\theta_{1, t-1}, ..., \theta_{i, t-1}, ...,\theta_{n, t-1})}{\partial \theta_{j, t-1}} \frac{\partial \theta_{j, t-1}}{\partial \alpha_{i, t-1}},
    \end{split}\label{Eq:5.3}
\end{equation}
where $N$ is the total number of all the model parameters. Eq.~\eqref{Eq:5.3} can be generalized to group-wise updating, where we associate a learning rate with a special group of parameters, and each parameter group is updated according to its only learning rate. Assume $\boldsymbol{\theta}_t = u(\boldsymbol{\Theta}_{t-1},\alpha)$ is the updating rule, where $\boldsymbol{\Theta}_t=\{\boldsymbol{\theta}_s\}_{s=0}^t$ and $\alpha$ is the learning rate, then the basic gradient descent method for each group $i$ gives $\boldsymbol{\theta}_{i, t} = u(\boldsymbol{\Theta}_{ t-1},\alpha_{i,t-1}) = \boldsymbol{\theta}_{i, t-1} - \alpha_{i,t-1} \nabla_{\boldsymbol{\theta}_i} f(\boldsymbol{\theta}_{t-1})$. Hence for gradient descent,
\begin{equation}
     \frac{\partial f(\boldsymbol{\theta}_{t-1})}{\partial \alpha_{i,t-1}} = \nabla_{\boldsymbol{\theta}_i} f(\boldsymbol{\theta}_{ t-1})^T\nabla_{\alpha_{i,t-1}} u(\boldsymbol{\Theta}_{t-1},\alpha_{t}) = -\nabla_{\boldsymbol{\theta}_i} f(\boldsymbol{\theta}_{ t-1})^T\nabla_{\boldsymbol{\theta}_i} f(\boldsymbol{\theta}_{t-2}). \label{Eq:5.5}
\end{equation}
Here $\alpha_{i,t-1}$ is a scalar with index $i$ at time step $t-1$, corresponding to the learning rate of the $i$th group, while the shape of $\nabla_{\boldsymbol{\theta}_i} f(\boldsymbol{\theta})$ is the same as the shape of $\boldsymbol{\theta}_i$.
\newline\newline
We particularly consider three special cases: (1) In \textbf{layer-wise adaptation}, $\boldsymbol{\theta}_i$ is the weight matrix of $i$th layer, and $\alpha_i$ is the particular learning rate for this layer. (2) In \textbf{parameter-wise adaptation}, $\theta_i$ corresponds to a certain parameter involved in the model, which can be an element of the weight matrix in a certain layer. (3) We can also introduce \textbf{unit-wise adaptation}, where $\boldsymbol{\theta}_i$ is the weight vector connected to a certain neuron, corresponding to a column or a row of the weight matrix depending on whether it is the input or the output weight vector to the neuron concerned.   \citet{baydin2017online} mentioned the case where the learning rate can be considered as a vector, which corresponds to layer-wise adaptation in this paper.

\subsection{Regularization on learning rate} \label{Sec:5.3.2}

For the model involving a large number of learning rates for different groups of parameters, the updating for each learning rate only depends on the average of a small number of examples. Therefore, when the batch size is also not large, over-parameterization is an issue to be concerned. 
\newline\newline
The idea in this study is to introduce regularization on learning rates, which can be implemented to control the flexibility of learning rate adaptation. First, for layer-wise adaptation, we can add the following regularization term to the cost function:
\begin{equation}
    L_{\text{lr\_reg\_layer}} = \lambda_{\text{layer}} \sum_l (\alpha_l - \alpha_g)^2
\end{equation}
where $l$ is the indices for each layer, $\lambda_{layer}$ is the layer-wise regularization coefficient, $\alpha_l$ and $\alpha_g$ are the layer-wise and global-wise adaptive learning rates. A large $\lambda_{\text{layer}}$ can push the learning rate of each layer towards the average learning rate across all the layers. In the extreme case, this will lead to very similar learning rates for all layers, and the algorithm will be reduced to that in \citep{baydin2017online}. 
\newline\newline
In addition, we can also consider the case where three levels of learning rate adaptations are involved, including global-wise, layer-wise and parameter-wise adaptation. If we introduce two more regularization terms to control the variation of parameter-wise learning rate with respect to layer-wise learning rate and global learning rates, the regularization loss can be written as: 
\begin{equation}
\begin{split}
L_{\text{lr\_reg\_para}} & = \lambda_{\text{layer}} \sum_l (\alpha_l - \alpha_g)^2 + \lambda_{\text{para\_layer}} \sum_l \sum_p (\alpha_{pl} - \alpha_l)^2\\
    & + \lambda_{\text{para}} \sum_l\sum_p (\alpha_{pl} - \alpha_g)^2
\end{split}
\label{Eq:5.7}
\end{equation}
where $p$ represents the index of each parameter within each layer. The second and third terms are the regularization terms pushing each parameter-wise learning rate towards the layer-wise learning rate, and the term of pushing the parameter-wise learning rate towards the global learning rates, while $\lambda_{\text{para\_layer}}$ and $\lambda_{\text{para\_layer}}$ are the corresponding regularization coefficients. 
\newline\newline
With these regularisation terms, the flexibility and variances of learning rates in different levels can be neatly controlled, while it can reduce to the basement case where a single learning rate for the whole model is used. In addition, there could still be one more regularization for improving the stability across different time steps, which can be used in the original hyper-descent algorithm where the learning rate in each time step is a scalar:
\begin{equation}
    L_{\text{lr\_reg\_ts}} = \lambda_{\text{ts}} (\alpha_{g,t} - \alpha_{g,t-1})^2
\end{equation}
where $\lambda_{ts}$ is the regularization coefficient to control the difference of learning rates between current step and the last step. With this term, the model with learning rate adaptation will be close to the model with fixed learning rate as large regularization coefficients are used. Thus, we can write the loss function of the full model as:
\begin{equation}
    L_{\text{full}} = L_{\text{model}} + L_{\text{model\_reg}} + L_{\text{lr\_reg}} + L_{\text{lr\_reg\_ts}}
\end{equation}
where $L_{\text{model}}$ and $L_{\text{model\_reg}}$ are the loss and regularization cost of basement model. $L_{\text{lr\_reg}}$ can be any among $L_{\text{lr\_reg\_layer}}$, $L_{\text{lr\_reg\_unit}}$ and $L_{\text{lr\_reg\_para}}$ depending on the specific requirement of the learning task, while the corresponding regularization coefficients can be optimized with random search for several extra dimensions.

\subsection{Updating rules for learning rates} \label{Sec:5.3.3}
Considering these regularisation terms and take layer-wise adaptation for example, the gradient of the cost function with respect to a specific learning rate $\alpha_l$ in layer $l$ can be written as:
\begin{equation}
\begin{split}
    \frac{\partial L_{\text{full}}(\theta, \alpha)}{\partial \alpha_{l, t}} &= \frac{\partial L_{\text{model}}(\theta, \alpha)}{\partial \alpha_{l, t}} +  \frac{\partial L_{\text{lr\_reg}}(\theta, \alpha)}{\partial \alpha_{l, t}}\\
    &=  \tilde{\nabla}_{\theta_l} f(\theta_{t-1})\nabla_{\alpha_{l, t-1}} u(\Theta_{t-2}, \alpha_{t-1}) + 2\lambda_{\text{layer}}  (\alpha_{l, t} - \alpha_{g, t})
    \end{split}
\end{equation}
with the corresponding updating rule by na\"ive gradient descent:
\begin{equation}
    \alpha_{l,t} = \alpha_{l,t-1} - \beta \frac{\partial L_{\text{full}}}{\partial \alpha_{l,t-1}}.
    \label{Eq:5.12}
\end{equation}
The updating rule for other types of adaptation can be derived accordingly. Notice that the time step index of layer-wise regularization term is $t$ rather than $t-1$, which ensures that we push the layer-wise learning rates towards the corresponding global learning rates of the current step. If we assume 
\begin{equation}
h_{l, t-1} = - 
\tilde{\nabla}_{\theta_l} f(\theta_{t-1})\nabla_{\alpha_{l,t-1}} u(\Theta_{t-2}, \alpha_{l, t-1})
\end{equation}
then Eq.~\eqref{Eq:5.12} can be written as:
\begin{equation}
    \alpha_{l,t} = \alpha_{l,t-1} - \beta(-h_{l, t-1} + 2 \lambda_{\text{layer}}(\alpha_{l, t} - \alpha_{g, t})). 
    \label{Eq:5.15}
\end{equation}
In Eq.~\eqref{Eq:5.15}, both sides include the term of $\alpha_{l, t}$, while the natural way to handle this is to solve for the close form of $\alpha_{t}$, which gives:
\begin{equation}
    \alpha_{l,t} = \frac{1}{1+2\beta\lambda_{\text{layer}}} [\alpha_{l,t-1} + \beta (h_{l, t-1} + 2 \lambda_{\text{layer}} \alpha_{g, t})]. 
    \label{Eq:5.16}
\end{equation}
In this formula, we still need to calculate $\alpha_{g,t}$, which is the global average learning rate in the current step. It will be even harder to calculate when there are multiple levels of learning rates, while the regularization still depends on their values in the current step. A more clean and probably computational efficient way of handling Eq.~\eqref{Eq:5.15} is to introduce approximations to get rid of $\alpha_{l, t}$ in the right hand side. If we do not consider the effect of regularization terms, the updating rule for layer-wise and global-wise learning rates can be written as:
\begin{equation}
\begin{split}
    &\hat{\alpha}_{l, t} = \alpha_{l, t-1} + \beta h_{l, t-1},\\
    &\hat{\alpha}_{g, t} = \alpha_{g, t-1} + \beta h_{g, t-1}
    \end{split}\label{Eq:5.17}
\end{equation}
where $h_{g, t-1}=-\tilde{\nabla}_{\theta} f(\theta_{t-1})\nabla_{\alpha_{g, t-1}} u(\Theta_{t-2}, \alpha_{g, t-1})$ is the global $h$ for all parameters. We define $\hat{\alpha}_{l, t}$ and $\hat{\alpha}_{g, t}$ as the ``virtual'' layer-wise and global-wise learning rates, where ``virtual'' means they are calculated based on the equation without regularization, and we do not use them directly for model parameter updating. Instead, we only use them as intermediate variables for calculating the real layer-wise learning rate for model training. 
\begin{equation}
\begin{split}
    \alpha^{*}_{l,t} &= \alpha_{l,t-1} + \beta h_{l, t-1} - 2 \beta\lambda_{\text{layer}}(\hat{\alpha}_{l, t} - \hat{\alpha}_{g,t})\\ 
    & = (1 - 2 \beta\lambda_{\text{layer}} ) \hat{\alpha}_{l, t} + 2\beta\lambda_{\text{layer}}\hat{\alpha}_{g,t}.
    \end{split} \label{Eq:5.18}
\end{equation}
Notice that in Eq.~\eqref{Eq:5.18}, the first two terms is actually a weighted average of the layer-wise learning rate $\hat{\alpha}_{l, t}$ and global learning rate $\hat{\bar{\alpha}}_{l, t}$ at the current time step. Since we hope to push the layer-wise learning rates towards the global one, the parameters should meet the constraint: $0 < 2\beta\lambda_{layer} <1$, and thus they can be optimized using hyper-parameter searching within a bounded interval. Moreover, gradient-based optimization on these hyper-parameters can also be applied. Hence both the layer-wise learning rates and the combination proportion of the local and global information can be learned with back propagation. This can be done in online or mini-batch settings. The advantage is that the learning process may be in favor of taking more account of global information in some periods, and taking more local information in some other periods to achieve the best learning performance, which is not taken into consideration by existing learning adaptation approaches. 
\newline\newline
Now consider the difference between Eq.~\eqref{Eq:5.15} and Eq.~\eqref{Eq:5.18}:
\begin{equation}
\begin{split}
    \alpha^{*}_{l,t} - \alpha_{l,t} = -2\beta\lambda_{\text{layer}}((\hat{\alpha}_{l, t} - \hat{\alpha}_{g,t}) - (\alpha_{l, t} - \alpha_{g, t})).
    \end{split}\label{Eq:5.19}
\end{equation}
Based on the setting of multi-level adaptation, on the right-hand side of Eq.~\eqref{Eq:5.19}, global learning rate is updated without regularization $\hat{\alpha}_{g, t} = \alpha_{g, t}$. For the layer-wise learning rates, the difference is given by $\hat{\alpha}_{l,t} - \alpha_{l,t} = 2\beta\lambda_{layer}(\alpha_{l,t}-\alpha_{g,t})$, which corresponds to the gradient with respect to the regularization term. Thus, Eq.~\eqref{Eq:5.19} can be rewritten as:
\begin{equation}
\begin{split}
        \alpha^{*}_{l,t} - \alpha_{l,t} &= -2\beta\lambda_{\text{layer}}(2\beta\lambda_{\text{layer}}(\alpha_{l,t}-\alpha_{g,t}))\\
        & = -4\beta^2\lambda^2_{\text{layer}}(\alpha_{l,t}-\alpha_{g,t}) = -4\beta^2\lambda^2_{\text{layer}}(1-\frac{\alpha_{g,t}}{\alpha_{l,t}})\alpha_{l,t}
        \end{split}
\end{equation}
which is the error of the virtual approximation introduced in Eq.~\eqref{Eq:5.17}. If $4\beta^2\lambda^2_{\text{layer}}<<1$ or $\frac{\alpha_{g,t}}{\alpha_{l,t}} \rightarrow 1$, this approximation becomes more accurate.
\newline\newline
Another way for handling Eq.~\eqref{Eq:5.15} is to use the learning rates for the last step in the regularization term.
\begin{equation}
    \alpha_{l,t} \approx \alpha_{l,t-1} - \beta(-h_{l, t-1} + 2 \lambda_{\text{layer}}(\alpha_{l, t-1} - \alpha_{g, t-1})). 
\end{equation}
Since we have $\alpha_{l, t} = \hat{\alpha}_{l, t}-2\beta\lambda_{\text{layer}}(\alpha_{l,t}-\alpha_{g,t})$ and $\hat{\alpha}_{l, t} = \alpha_{l, t-1} + \beta h_{l, t-1}$, using the learning rates in the last step for regularization will introduce a higher variation from term $\beta h_{l, t-1}$, with respect to the true learning rates in the current step. Thus, we consider the proposed virtual approximation works better than last-step approximation. 
\newline\newline
Similar to the two-level's case, for the three-level regularization shown in Eq.~\eqref{Eq:5.7}, we have: 
\begin{equation}
\begin{split}
    \frac{\partial L_{\text{full}}(\theta, \alpha)}{\partial \alpha_{p, t}} &= \frac{\partial L_{\text{model}}(\theta, \alpha)}{\partial \alpha_{p, t}} +  \frac{\partial L_{\text{lr\_reg}}(\alpha)}{\partial \alpha_{p, t}}\\ 
    &=  -\tilde{\nabla}_{\theta_l} f(\theta_{t-1})\nabla_{\theta_l} u(\Theta_{t-2}, \alpha_{t-1}) + 2\lambda_2 (\alpha_{p,t} - \alpha_{g,t})
    +2\lambda_3 (\alpha_{p,t}-\alpha_{l,t})
    \end{split}
\end{equation}
For the sake of simple derivation, we denote $\lambda_2 = \lambda_{\text{layer}}$, and $\lambda_3 = \lambda_{\text{para\_layer}}$ for the regularization parameters in Eq.~\eqref{Eq:5.7}. The updating rule can be written as: 
\begin{equation}
\begin{split}
    \alpha_{p,t} &= \alpha_{p, t-1}-\beta(h_p + 2\lambda_2 (\alpha_{p,t} - \alpha_{g,t})+2\lambda_3 (\alpha_{p,t}-\alpha_{l,t}))\\
    &\approx \hat{\alpha}_{p,t} (1-2\beta\lambda_2 - 2\beta\lambda_3) +2\hat{\alpha}_{l,t}\beta\lambda_3+2\hat{\alpha}_{g,t}\beta\lambda_2
    \end{split}
\end{equation}
where we assume that $\hat{\alpha}_{p,t}$, $\hat{\alpha}_{l,t}$, $\hat{\alpha}_{g,t}$ are independent variables. Define 
\[\gamma_1 = 1-2\beta\lambda_2 - 2\beta\lambda_3,\, \gamma_2 = 2\beta\lambda_3,\,  \gamma_3 = 2\beta \lambda_2,\]
we still have:
\begin{equation}
\begin{split}
    &\alpha = \gamma_1\alpha_p + \gamma_2\alpha_l + \gamma_3\alpha_g,\\
    &\gamma_1 + \gamma_2 + \gamma_3 = 1.
    \end{split}
\end{equation}
Therefore, in the case of three level learning rates adaptation, the regularization effect can still be considered as applying the weighted combination of different levels of learning rates. This conclusion is invariant of the signs in the absolute operators in Eq.~\eqref{Eq:5.17}. \newline\newline
In general, we can organize all the learning rates in a tree structure. For example, in three level case above, $\alpha_g$ will be the root node, while $\{\alpha_l\}$ are the children node at level 1 of the tree and $\{\alpha_{lp}\}$ are the children node of $\alpha_l$ as leave nodes at level three of the tree. In a general case, we assume there are $L$ levels in the tree. Denote the set of all the paths from the root node to each of leave nodes as $\mathcal{P}$ and a path is denoted by $p = \{\alpha_1, \alpha_2, ..., \alpha_L\}$ where $\alpha_1$ is the root node and $\alpha_L$ is the leave node on the path. On this path, denote $\text{ancestors}(i)$ all the acenstor nodes of $\alpha_i$ along the path, i.e., $\text{ancestors}(i) = \{\alpha_1, ..., \alpha_{i-1}\}$. We will construct a regularizer to push $\alpha_i$ towards each of its parents. Then the regularization can be written as
\begin{align}
L_{\text{lr\_reg}} = \sum_{p\in\mathcal{P}}\sum_{\alpha_i\in p}\sum_{\alpha_j\in\text{acenstors}(i)}\lambda_{ij}(\alpha_i - \alpha_j)^2. \label{eq:5.overall}
\end{align}
Under this pair-wise $L_2$ regularization, the updating rule for any leave node learning rate $\alpha_L$ can be given by the following theorem
\begin{thm} Under virtual approximation, effect of adding pair-wise $L_2$ regularization on different levels of adaptive learning rates $L_{\text{reg}} = \sum^n_i \sum^n_{j<i} \lambda_{ij}\|\alpha_i - \alpha_j\|^2_2$ is equal to performing a weighted linear combination of virtual learning rates in different levels $\alpha^{*} = \sum^n_i\gamma_i \alpha_{i}$ with $\sum^n_i \gamma_i = 1$, where each component $\alpha_i$ is calculated by assuming there is no regularization.
\label{thm:1}
\end{thm}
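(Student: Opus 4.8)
The plan is to run the mechanism already used for the two- and three-level cases in Eq.~\eqref{Eq:5.18} once and for all along a single root-to-leaf path of $n$ nodes, then collect coefficients. Since $\alpha_L=\alpha_n$ is a leaf, every other learning rate on its path is an ancestor, so in the pair-wise regularizer $L_{\text{reg}}=\sum_i\sum_{j<i}\lambda_{ij}\|\alpha_i-\alpha_j\|_2^2$ the only terms depending on $\alpha_n$ are those with $i=n$, namely $\sum_{j<n}\lambda_{nj}\|\alpha_n-\alpha_j\|_2^2$ (there are no $j=n$ terms because no index exceeds $n$). First I would differentiate to obtain $\partial L_{\text{reg}}/\partial\alpha_n=\sum_{j<n}2\lambda_{nj}(\alpha_n-\alpha_j)$, and combine it with the model hyper-gradient $-h_n$ to write the na\"ive update as the implicit equation $\alpha_{n,t}=\alpha_{n,t-1}+\beta h_{n,t-1}-\beta\sum_{j<n}2\lambda_{nj}(\alpha_{n,t}-\alpha_{j,t})$, exactly mirroring Eq.~\eqref{Eq:5.15}.

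Next I would invoke the virtual approximation of Eq.~\eqref{Eq:5.17}: replace every current-step learning rate appearing inside the regularization term by its regularization-free counterpart $\hat{\alpha}_{i,t}$. This turns the implicit equation into the explicit form $\alpha^{*}_{n,t}=\hat{\alpha}_{n,t}-\sum_{j<n}2\beta\lambda_{nj}(\hat{\alpha}_{n,t}-\hat{\alpha}_{j,t})$, where I use $\hat{\alpha}_{n,t}=\alpha_{n,t-1}+\beta h_{n,t-1}$. Collecting the coefficient of each virtual learning rate gives $\alpha^{*}_{n,t}=\gamma_n\hat{\alpha}_{n,t}+\sum_{j<n}\gamma_j\hat{\alpha}_{j,t}$ with ancestor weights $\gamma_j=2\beta\lambda_{nj}$ for $j<n$ and self-weight $\gamma_n=1-\sum_{j<n}2\beta\lambda_{nj}$, which recovers the formulas of Eq.~\eqref{Eq:5.18} and the three-level derivation as the special cases $n=2$ and $n=3$.

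The crux is then the normalization, which I would verify directly: $\sum_{i=1}^n\gamma_i=\big(1-\sum_{j<n}2\beta\lambda_{nj}\big)+\sum_{j<n}2\beta\lambda_{nj}=1$, so the ancestor weights cancel against the self-weight by construction. This shows the regularized leaf update is an affine combination of the virtual learning rates at every level up to and including the leaf; it is moreover convex under the interval constraint $0<2\beta\sum_{j<n}\lambda_{nj}<1$, the direct analogue of the two-level condition $0<2\beta\lambda_{\text{layer}}<1$, though the theorem asserts only $\sum_i\gamma_i=1$.

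I expect the main obstacle to be conceptual rather than computational. The stated equality holds only once the virtual approximation is accepted, so care is needed to state precisely what is being replaced, namely \emph{all} current-step $\alpha_{i,t}$ inside $L_{\text{reg}}$ by $\hat{\alpha}_{i,t}$, and to note that each ancestor's own regularization is ignored at this stage, which is exactly the assumption (made in the three-level derivation) that $\hat{\alpha}_{p,t},\hat{\alpha}_{l,t},\hat{\alpha}_{g,t}$ are treated as independent. A secondary subtlety is that the $\alpha_i$ may be vectors of differing dimension across levels; since the gradient of $\|\alpha_i-\alpha_j\|_2^2$ acts coordinate-wise, the scalar computation along one path extends componentwise and the argument is unaffected.
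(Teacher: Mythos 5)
Your proposal is correct and takes essentially the same route as the paper's own proof: isolate the pair-wise terms involving the leaf learning rate, form the implicit hyper-gradient update, substitute the virtual (regularization-free) learning rates $\hat{\alpha}_{j,t}$, and read off the weights $\gamma_j = 2\beta\lambda_{nj}$ for ancestors with self-weight $\gamma_n = 1-2\beta\sum_{j<n}\lambda_{nj}$, which sum to one by construction. If anything, your write-up is slightly cleaner than the paper's, which carries a stray factor $\alpha_{n,t}$ inside the self-weight coefficient in its displayed approximation step.
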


\textit{Remarks:} Theorem~\ref{thm:1} actually suggests that the similar updating rule can be obtained for the learning rate at the any level on the path. All these have been demonstrated in Algorithm~\ref{Alg:3.1} for the three level case.
\begin{proof} Consider the learning regularizer
\begin{align}
L_{\text{lr\_reg}}(\alpha) = \sum_{p\in\mathcal{P}}\sum_{\alpha_i\in p}\sum_{\alpha_j\in\text{parents}(i)}\lambda_{ij}(\alpha_i - \alpha_j)^2. \label{eq:5.overall_appendix}
\end{align}
To apply hyper-gradient descent method to update the learning rate $\alpha_L$ at level $L$, we need to work the derivative of $L_{\text{lr\_reg}}$ with respect to $\alpha_L$, the terms in \eqref{eq:5.overall_appendix} involving $\alpha_L$ are only $(\alpha_i - \alpha_j)^2$ where $\alpha_j$ is an ancestor on the path from the root to the leave node $\alpha_L$. Hence
\begin{equation}
\begin{split}
    \frac{\partial L_{\text{full}}(\boldsymbol{\theta}, \alpha)}{\partial \alpha_{L, t}} &= \frac{\partial L_{\text{model}}(\boldsymbol{\theta}, \alpha)}{\partial \alpha_{L, t}} +  \frac{\partial L_{\text{lr\_reg}}(\alpha)}{\partial \alpha_{L, t}}\\ 
    &=  -\tilde{\nabla}_{\boldsymbol{\theta}_L} f(\boldsymbol{\theta}_{t-1})^T\nabla_{\boldsymbol{\theta}_L} u(\Theta_{t-2}, \alpha_{t-1}) + \sum_{\alpha_j\in \text{acenstors}(L)} 2\lambda_{Lj} (\alpha_{L,t} - \alpha_{j,t}).
    \end{split}
\end{equation}
As there are exactly $L-1$ ancestors on the path, we can simply use the index $j = 1, 2, ..., L-1$. The corresponding updating function for $\alpha_{n,t}$ is:
\begin{equation}
\begin{split}
    \alpha_{L,t} &= \alpha_{n, t-1}-\beta(h_L + \sum_{j=1}^{L-1} 2\lambda_{Lj} (\alpha_{L,t} - \alpha_{j,t}))\\
    &\approx \hat{\alpha}_{L,t} (1- 2\beta\sum_{j=1}^{L-1}\lambda_{Lj}\alpha_{n,t}) + \sum_{j=1}^{L-1} (2\beta\lambda_{Lj}\hat{\alpha}_{j,t}))\\
    &=\sum_{j=1}^L \gamma_{j} \hat{\alpha}_{j,t}.
    \end{split}
\end{equation}
where
\begin{equation}
\begin{split}
&\gamma_L = 1- 2\beta\sum_{j=1}^{L-1}\lambda_{Lj},\\
&\gamma_j = 2\beta\lambda_{Lj}, \quad \text{for } j =1, 2, ..., L-1.
\end{split}
\end{equation}
This form satisfies $\alpha^{*}_L = \sum^L_{j=1}\gamma_j \hat{\alpha}_{j}$ with $\sum^L_{j=1} \gamma_j = 1$. This completes the proof.
\end{proof}

\subsection{Prospective of learning rate combination}\label{Sec:5.3.4}
Motivated by the analytical derivation in Section ~\ref{Sec:5.3.3}, we can consider the combination of adaptive learning rates in different levels as a substitute of regularization on the differences of learning rates. As a simple case, the combination of global-wise and layer-wise adaptive learning rates can be written as:
\begin{equation}
    \alpha_t = \gamma_1 \hat{\alpha}_{l, t} + \gamma_2 \hat{\alpha}_{g,t},
\end{equation}
where $\gamma_1 + \gamma_2 = 1$ and $\gamma_1\geq 0$, $\gamma_2\geq 0$. In a general form, assume that we have $n$ levels, which could include global-level, layer-level, unit-level and parameter-level, etc, we have:
\begin{equation}
    \alpha_t = \sum_{i=1}^n\gamma_i \hat{\alpha}_{i, t}.
    \label{Eq:5.25}
\end{equation}
In a more general form, we can implement non-linear models such as neural networks to model the final adaptive learning rates with respect of the learning rates in different levels.
\begin{equation}
    \alpha_t = g(\hat{\alpha}_{1, t}, \hat{\alpha}_{2, t} ... \hat{\alpha}_{n, t}; \theta),
\end{equation}
where $\theta$ is the vector of parameters of the non-linear model. In this study, we treat the combination weights $\{\gamma_1, ..., \gamma_n\}$ as trainable parameters as demonstrated in Eq.~\eqref{Eq:5.25}. Figure~\ref{Fig:4.CAM-HD} gives an illustration of the linear combination of three-level hierarchical learning rates.
\begin{figure}[th] 
\begin{center}
 \includegraphics[width=1.0\linewidth]{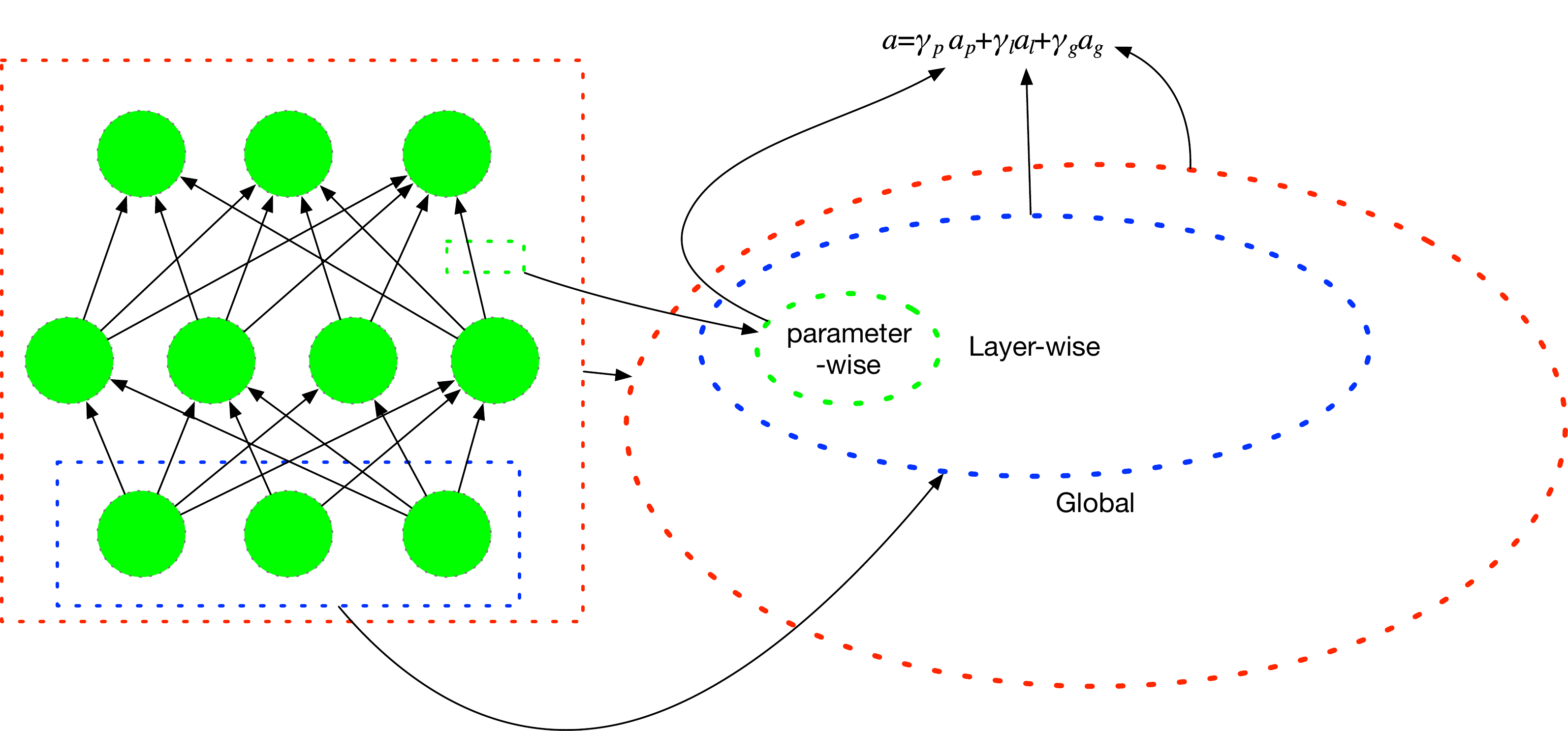}
 \caption{The diagram of a three-level learning rate combination} \label{Fig:4.CAM-HD}
\end{center} 
\end{figure}
In fact, we only need these different levels of learning rate have a hierarchical relationship, which means the selection of component levels is not fixed. For example, in feed-forward neural networks, we can use parameter level, unit-level, layer level and global level. For recurrent neural networks, the corresponding layer level can either be the ``layer of gate'' within the cell structure such as LSTM and GRU, or the whole cell in a particular RNN layer. Especially, by ``layer of gate'' we mean the parameters in each gate of a cell structure share a same learning rate. Meanwhile, for convolutional neural network, we can further introduce ``filter level'' to replace layer-level if their is no clear layer structure, where the parameters in each filter will share a same learning rate.
\newline\newline
As the real learning rates implemented in model parameter updating is a weighted combination, the corresponding Hessian matrices cannot be directly used for learning rate updating. If we take the gradients of the loss with respect to the combined learning rates, and use this to update the learning rate for each parameter, the procedure will be reduced to parameter-wise learning rate updating. To address this issue, we first break down the gradient by the combined learning rate to three levels, use each of them to updated the learning rate in each level, and then calculate the combination by the updated learning rates. Especially, $h_{p, t}$, $h_{l, t}$ and $h(g,t)$ are calculated by the gradients of model losses without regularization, as is shown in Eq. ~\eqref{Eq:5.30}. 
\begin{equation}
\begin{split}
  h_{p, t} &= \frac{\partial f(\boldsymbol{\theta}, \alpha)}{\partial \alpha_{p, t}} = 
  -\nabla_{\boldsymbol{\theta}}f(\boldsymbol{\theta}_{t-1}, \alpha)|_p \cdot\nabla_{\alpha}u(\boldsymbol{\Theta}_{t-2}, \alpha)|_p\\
  h_{l, t} &= \frac{\partial f(\boldsymbol{\theta}, \alpha)}{\partial \alpha_{l, t}} = 
  -\text{tr}(\nabla_{\boldsymbol{\theta}}f(\boldsymbol{\theta}_{t-1}, \alpha)|_l^T \nabla_{\alpha}u(\boldsymbol{\Theta}_{t-2}, \alpha)|_l)\\
  h_{g, t} &= \frac{\partial f(\boldsymbol{\theta}, \alpha)}{\partial \alpha_{t}} = 
  -\sum_{l=1}^n\text{tr}(\nabla_{\boldsymbol{\theta}}f(\boldsymbol{\theta}_{t-1}, \alpha)|_l^T \nabla_{\alpha}u(\boldsymbol{\Theta}_{t-2}, \alpha)_l)
\end{split}\label{Eq:5.30}
\end{equation}
where $h_t = \sum_l h_{l,t} = \sum_p h_{p, t}$ and $h_{l,t} = \sum_{p\in l\text{th layer}} h_{p}$ and $f(\theta, \alpha)$ corresponds to the model loss $L_{model}(\theta, \alpha)$ in Section ~\ref{Sec:5.3.2}. 
Algorithm ~\ref{Alg:3.1} is the full updating rules for the newly proposed optimizer with three levels, which can be denoted as combined adaptive multi-level hyper-gradient descent (CAM-HD).
\begin{algorithm}
\SetAlgoLined
 \bf{input}: $\alpha_{0}$, $\beta$, $\delta$, $T$\newline
 \bf{initialization}: $\theta_0$, $\gamma_{1,0}$, $\gamma_{2,0}$, $\gamma_{3,0}$, $\alpha_{p,0}$, $\alpha_{l,0}$, $\alpha_{0}, \alpha^{*}_{l,0} = \gamma_{1,0} \alpha_{p,0} + \gamma_{2,0} \alpha_{l,0} + \gamma_{3,0} \alpha_0$
 \newline
  \For{$t \in {1, 2, ..., T}$} {
  $g_t = \nabla_{\theta}f(\theta, \alpha)$\newline
  $h_{p, t} = \frac{\partial f(\theta, \alpha)}{\partial \alpha_{p, t}} = 
  -\nabla_{\theta}f(\theta_{t-1}, \alpha)|_p \cdot\nabla_{\alpha}u(\Theta_{t-2}, \alpha)|_p
  $ \newline
  $h_{l, t} = \frac{\partial f(\theta, \alpha)}{\partial \alpha_{l, t}} = 
  -\text{tr}(\nabla_{\theta}f(\theta_{t-1}, \alpha)|^T_l \nabla_{\alpha}u(\Theta_{t-2}, \alpha)|_l)
  $ \newline
  $h_{g,t} = \frac{\partial f(\theta, \alpha)}{\partial \alpha_{t}} = 
  -\sum_{l=1}^n\text{tr}(\nabla_{\theta}f(\theta_{t-1}, \alpha)|_l^T \nabla_{\alpha}u(\Theta_{t-2}, \alpha))|_l)
  $ \newline
  $\alpha_{p,t} = \alpha_{p,t-1} - \beta_p \frac{\partial f(\theta_{t-1})}{\partial \alpha^{*}_{p, t-1}}\frac{\partial \alpha^{*}_{p, t-1}}{\partial \alpha_{p, t-1}} = \alpha_{p,t-1} - \beta_p \gamma_{1, t-1}h_{p,t}$\newline
  $\alpha_{l,t} = \alpha_{l,t-1} - \beta_l\sum_p \frac{\partial f(\theta_{t-1})}{\partial \alpha^{*}_{p, t-1}}\frac{\partial \alpha^{*}_{p, t-1}}{\partial \alpha_{l, t-1}}= \alpha_{l,t-1} - \beta_l \gamma_{2, t-1} \sum_p h_{p,t} = \alpha_{l,t-1} - \beta_l \gamma_{2, t-1} h_{l,t}$\newline
  $\alpha_{t}=\alpha_{t-1}-\beta_g \sum_l \sum_p \frac{\partial f(\theta)}{\partial \alpha^{*}_{p,t-1}}\frac{\partial \alpha^{*}_{p, t-1}}{\partial \alpha_{t-1}}= \alpha_{t-1}-\beta_g \gamma_{3, t-1} h_{g, t}$ \newline
  $\alpha^{*}_{p,t} = \gamma_{1,t-1} \alpha_{p,t} + \gamma_{2,t-1} \alpha_{l,t} + \gamma_{3,t-1} \alpha_t $\newline
  $\gamma_{1, t} = \gamma_{1, t-1}-\delta \frac{\partial L}{\partial \gamma_{1, t-1}} = \gamma_{1, t-1}-\delta\sum_p \frac{\partial L}{\partial \alpha^{*}_{p, t-1}}\frac{\partial \alpha^{*}_{p, t-1}}{\partial \gamma_{1, t-1}} = \gamma_{1, t-1}-\delta\alpha_{p,t-1}\sum_p \frac{\partial L}{\partial \alpha^{*}_{p, t-1}}$ \newline
  $\gamma_{2, t} = \gamma_{2, t-1}-\delta \frac{\partial L}{\partial \gamma_{2, t-1}} = \gamma_{2, t-1}-\delta\sum_p \frac{\partial L}{\partial \alpha^{*}_{p, t-1}}\frac{\partial \alpha^{*}_{p, t-1}}{\partial \gamma_{2, t-1}} = \gamma_{1, t-1}-\delta\alpha_{l,t-1}\sum_p \frac{\partial L}{\partial \alpha^{*}_{p, t-1}} $ \newline
  $\gamma_{3, t} = \gamma_{3, t-1}-\delta \frac{\partial L}{\partial \gamma_{3, t-1}} = \gamma_{3, t-1}-\delta\sum_p\frac{\partial L}{\partial \alpha^{*}_{p, t-1}}\frac{\partial  \alpha^{*}_{p, t-1}}{\partial \gamma_{3, t-1}}  =  \gamma_{3, t-1}-\delta \alpha_{t-1}\sum_p \frac{\partial L}{\partial \alpha^{*}_{p, t-1}} $  \newline
  $\gamma_1 = \gamma_1/(\gamma_1 + \gamma_2 + \gamma_3)$,
  $\gamma_2 = \gamma_1/(\gamma_1 + \gamma_2 + \gamma_3)$,
  $\gamma_3 = \gamma_1/(\gamma_1 + \gamma_2 + \gamma_3)$\newline
  $m_t = \phi_t(g_1,...g_t)$\newline
  $V_t = \psi_t(g_1,...g_t)$\newline
  $\theta_t = \theta_{t-1} - \alpha^{*}_{p,t}m_t/\sqrt{V_t}$ \newline
 }
\Return $\theta_T$, $\gamma_{1,T}$, $\gamma_{2,T}$, $\gamma_{3,T}$, $\alpha_{p,T}$, $\alpha_{l,T}$, $\alpha_{T}$
 \caption{Updating rule of three-level CAM-HD}\label{Alg:3.1}
\end{algorithm}
where we introduce the general form of gradient descent based optimizers\citep{reddi2019convergence, luo2019adaptive}. For SGD, $\phi_t(g_1,...g_t)=g_t$ and $\psi_t(g_1,...g_t)=1$, while for Adam, $\phi_t(g_1,...g_t)= (1-\beta_1)\Sigma_{i=1}^t \beta^{t-1}_1 g_i$ and $\psi_t(g_1,...g_t)=(1-\beta_2)\text{diag}(\Sigma_{i=1}^t \beta_2^{t-1} g_i^2)$. Notice that in each updating time step of Algorithm ~\ref{Alg:3.1}, we re-normalize the combination weights $\gamma_1$, $\gamma_2$ and $\gamma_3$ to make sure that their summation is always 1 even after updating with stochastic gradient-based methods. An alternative way of doing this is to implemented softmax, which require an extra set of intermediate variables $c_p$, $c_l$ and $c_g$ following: $\gamma_p=\text{softmax}(c_p)= \exp^{c_p}/(\exp^{c_p}+\exp^{c_l}+\exp^{c_g})$, etc. Then the updating of $\gamma$s will be convert to the updating of $c$s during training. In addition, the training of $\gamma$s can also be extended to multi-level cases, which means we can have different combination weights in different layers. For the updating rates $\beta_p$, $\beta_l$ and $\beta_g$ of the learning rates in different level, we set:
\begin{equation}
    \beta_p = n_p\beta=\beta,\, \beta_l = n_l\beta,\, \beta_g = n\beta
\end{equation}
where $\beta$ is a shared parameter. This setting will make the updating steps of learning rates in different levels be in the same scale considering the difference in the number of parameters involved in $h_{p,t}$, $h_{l,t}$, $h_{g,t}$. If we take average based on the number of parameters in Eq.~\eqref{Eq:5.30} at first, this adjustment is not required.

CAM-HD is a higher-level adaptation approach, which can be applied with any gradient-based updating rules and advanced adaptive optimizers. For exmaple, it can be merged with Adabound by adding a parameter-wise clipping procedure \citep{luo2019adaptive}:
\begin{equation}
    \hat{\eta} = \text{Clip}(\alpha^{*}/\sqrt{V_t}, \eta_l, \eta_u),\, \eta_t = \hat{\eta}/\sqrt{t}
    \label{Eq:clip}
\end{equation}
where $\alpha^{*}$ is the final step-size by original CAM-HD, $\eta_l$ and $\eta_u$ are the lower and upper bounds in adabound. $\eta_t$ can be applied in replacing $\alpha^{*}_{p,t}/\sqrt{V_t}$ in our algorithm for merging two methods to so called ``Adabound-CAM-HD''. In the experiment part, we will follow the original paper to set $\eta_l(t)=0.1-\frac{0.1}{(1-\beta_2)t+1}$ and $\eta_u(t)=0.1+\frac{0.1}{(1-\beta_2)t+1}$ for both Adabound and Adabound-CAM-HD.

\subsection{Convergence analysis:}\label{Sec:5.3.5}
The proposed CMA-HD is not an independent optimization method, which can be applied in any kinds of gradient-based methods. Its convergence properties highly depends on the base optimizer that is applied. Here we provide an analysis based on the general prospective of learning rate adaptation \citep{baydin2017online,karimi2016linear}. We have learned that for global-wise learning rate adaptation, if we assume that $f$ is convex and L-Lipschitz smooth with $\|\nabla f(\theta)\| < M$ for some fixed $M$ and all $\theta$, the learning rate $\alpha_t$ satisfies:
\begin{equation}
\begin{split}
    |\alpha_t|&\leq |\alpha_0|+\beta\sum_{i=0}^{t-1}|\nabla f(\theta_{i+1})^T \nabla f(\theta_i)|\leq |\alpha_0|+\beta\sum_{i=0}^{t-1}\|\nabla f(\theta_{i+1})\|\|\nabla f(\theta_i)\|\\
    &\leq |\alpha_0|+t\beta M^2
    \end{split}
\end{equation}
where $\alpha_0$ is the initial value of $\alpha$, and $\beta$ is the updating rate for hyper-gradient descent. By introducing $\kappa_{p,t} = \tau(t)\alpha^{*}_{p,t} + (1-\tau(t))\alpha_0$, where the function $\tau(t)$ is selected to satisfy $\tau(t)\rightarrow 0$ as $t\rightarrow \infty$, we have the following convergence theorem. 
\newline
\begin{thm}Convergence under certain assumptions about $f$
Suppose that $f$ is convex and L-Lipschitz smooth with $\|\nabla f(\theta)\| < M$ for some fixed $M$ and all $\theta$. Then $\theta_t\rightarrow \theta^{*}$ if $\alpha^{\infty}<1/L$ and $t\cdot\tau(t)\rightarrow 0 $ as $t \rightarrow \infty$, where the $\theta_t$ are generated accroding to (non-stochastic) gradient descent.
\end{thm}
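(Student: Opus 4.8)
The plan is to reduce the theorem to a classical fixed-step convex gradient-descent argument, by first showing that the annealed step size $\kappa_{p,t}$ behaves asymptotically like a constant step that is strictly below $1/L$. First I would record the iteration that actually governs the dynamics in this non-stochastic setting, namely $\theta_t = \theta_{t-1} - \kappa_{p,t}\nabla f(\theta_{t-1})$ with $\kappa_{p,t} = \tau(t)\alpha^{*}_{p,t} + (1-\tau(t))\alpha_0$. The crucial preliminary is to control $\kappa_{p,t}$. Using the linear bound $|\alpha^{*}_{p,t}| \le |\alpha_0| + t\beta M^2$ already established from $\|\nabla f(\theta)\| < M$, I would estimate $|\kappa_{p,t} - \alpha_0| = \tau(t)\,|\alpha^{*}_{p,t} - \alpha_0| \le 2|\alpha_0|\,\tau(t) + \beta M^2\,\bigl(t\,\tau(t)\bigr)$, and then invoke $\tau(t)\to 0$ together with the hypothesis $t\cdot\tau(t)\to 0$ to conclude $\kappa_{p,t}\to\alpha^{\infty}$, with $\alpha^{\infty}=\alpha_0$. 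This is exactly the step that consumes both decay hypotheses on $\tau$, turning the possibly unbounded adaptive rate into an effectively bounded one.

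Next, since $\kappa_{p,t}\to\alpha^{\infty}$ and $\alpha^{\infty}<1/L$, I would fix a constant $c$ with $\alpha^{\infty}<c<1/L$ and choose $T_0$ so that $0<\kappa_{\min}\le \kappa_{p,t}\le c<1/L$ for all $t\ge T_0$; the lower bound is available because $\kappa_{p,t}$ converges to the positive value $\alpha_0$. For $t\ge T_0$ the $L$-smoothness descent lemma gives $f(\theta_{t+1}) \le f(\theta_t) - \kappa_{p,t}\bigl(1 - \tfrac{L}{2}\kappa_{p,t}\bigr)\|\nabla f(\theta_t)\|^2 \le f(\theta_t) - \tfrac{1}{2}\kappa_{p,t}\|\nabla f(\theta_t)\|^2$, so the objective is monotonically non-increasing from $T_0$ onward.

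Then I would run the standard convex telescoping argument starting at the index $T_0$. Expanding $\|\theta_{t+1}-\theta^{*}\|^2$, using convexity in the form $\nabla f(\theta_t)^T(\theta_t-\theta^{*})\ge f(\theta_t)-f^{*}$ (where $f^{*}$ is the minimum value and $\theta^{*}$ a minimizer), and substituting the descent-lemma bound $\kappa_{p,t}\|\nabla f(\theta_t)\|^2\le 2\bigl(f(\theta_t)-f(\theta_{t+1})\bigr)$ into the quadratic term $\kappa_{p,t}^2\|\nabla f(\theta_t)\|^2$, I would obtain $\|\theta_{t+1}-\theta^{*}\|^2 \le \|\theta_t-\theta^{*}\|^2 - 2\kappa_{p,t}\bigl(f(\theta_{t+1})-f^{*}\bigr)$. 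This yields Fej\'er monotonicity of $\|\theta_t-\theta^{*}\|$ (hence boundedness and convergence of this distance) and, after summation, $\sum_t \kappa_{p,t}\bigl(f(\theta_{t+1})-f^{*}\bigr)<\infty$; combined with $\kappa_{p,t}\ge\kappa_{\min}>0$ and the monotone decrease of $f$, this forces $f(\theta_t)\to f^{*}$. A bounded subsequence of $\theta_t$ then has a limit point that is a minimizer, and Fej\'er monotonicity upgrades this to $\theta_t\to\theta^{*}$.

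The main obstacle, and the place I would be most careful, is the transition between the two regimes: for $t<T_0$ the step sizes may exceed $1/L$, so neither descent nor the Fej\'er inequality is guaranteed there. I would argue this is harmless because it is a finite initial segment over which each $\kappa_{p,t}$ is bounded by the same linear-in-$t$ quantity and the gradients are bounded, so $\theta_{T_0}$ sits at a finite distance from $\theta^{*}$ and the asymptotic argument can simply be initialized at $T_0$. A secondary technical point is that $\kappa_{p,t}$ is in principle a per-parameter (diagonal) step rather than a scalar; to keep the descent lemma in its clean form I would either treat the coordinatewise rates as sharing the common limit $\alpha^{\infty}$, so that the uniform bound $\kappa_{p,t}\le c<1/L$ holds across coordinates for large $t$, or state the conclusion for the effectively scalar asymptotic regime, which is all the theorem requires. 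I would also make explicit the standing assumptions that a minimizer $\theta^{*}$ exists and that $\alpha_0>0$.
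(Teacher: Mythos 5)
Your proof is correct, and its first half coincides with the paper's argument: the paper likewise uses the linear-in-$t$ bound $|\alpha^{*}_{p,t}|\leq |\alpha_0|+t\beta M^2$ (proved from $\|\nabla f(\theta)\|<M$) and the hypothesis $t\cdot\tau(t)\rightarrow 0$ to conclude $\kappa_{p,t}\rightarrow\alpha^{\infty}$; this is exactly the annealing step it carries out explicitly in the proof of its multi-level Theorem~\ref{thm:2}, the present statement being the scalar specialization inherited from \citep{baydin2017online}. Where you genuinely diverge is the endgame. Once the paper has $\kappa_{p,t}$ eventually inside an interval below $1/L$, it stops and appeals to standard convergence results for gradient descent with such step sizes \citep{karimi2016linear, sun2019optimization}; you instead supply a self-contained classical proof: the descent lemma gives $f(\theta_{t+1})\leq f(\theta_t)-\tfrac{1}{2}\kappa_{p,t}\|\nabla f(\theta_t)\|^2$ for $t\geq T_0$, and substituting $\kappa_{p,t}\|\nabla f(\theta_t)\|^2\leq 2\bigl(f(\theta_t)-f(\theta_{t+1})\bigr)$ into the expansion of $\|\theta_{t+1}-\theta^{*}\|^2$ yields the Fej\'er inequality $\|\theta_{t+1}-\theta^{*}\|^2\leq\|\theta_t-\theta^{*}\|^2-2\kappa_{p,t}\bigl(f(\theta_{t+1})-f^{*}\bigr)$, whose summation with $\kappa_{p,t}\geq\kappa_{\min}>0$ forces $f(\theta_t)\rightarrow f^{*}$ and then $\theta_t\rightarrow\theta^{*}$. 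Your route buys three things. First, it is verifiable in place of a citation. Second, it quietly repairs a loose point in the paper: the paper asserts that eventually $1/(L+1)<\kappa_{p,t}<1/L$, which does not follow from the stated hypotheses (it would require $\alpha^{\infty}>1/(L+1)$), whereas you only need $0<\kappa_{\min}\leq\kappa_{p,t}\leq c<1/L$, hence only $\alpha^{\infty}>0$ — an assumption you rightly make explicit, along with existence of a minimizer. Third, you explicitly handle the finite pre-asymptotic segment $t<T_0$ and flag the per-coordinate (diagonal) nature of $\kappa_{p,t}$, both of which the paper glosses over; your resolution of the latter (all coordinates share the limit $\alpha^{\infty}$, so a uniform bound $\kappa_{p,t}\leq c<1/L$ holds across coordinates for large $t$) is adequate for the theorem as stated, though a fully rigorous diagonal-step treatment would perturb the descent and Fej\'er estimates by the deviation $\max_p|\kappa_{p,t}-\alpha^{\infty}|$, which your convergence bound controls. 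What the paper's citation-based approach buys in exchange is brevity and immediate transfer to the broader settings of the cited results, which your scalar non-stochastic telescoping argument does not cover — nor does this theorem require it to.
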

\noindent The proposed CMA-HD is not an independent optimization method, which can be applied in any kinds of gradient-based updating rules. Its convergence properties highly depends on the base optimizer that is applied. By referring the discussion on convergence in \citep{baydin2017online}, if we introduce $\kappa_{p,t} = \tau(t)\alpha^{*}_{p,t} + (1-\tau(t))\alpha_{\infty}$, where the function $\tau(t)$ is selected to satisfy $t\tau(t)\rightarrow 0$ as $t\rightarrow \infty$, and $\alpha_{\infty}$ is a selected constant value. Then we demonstrate the convergence analysis for the three level case in the following theorem, where $\nabla_p$ is the the gradient of target function w.r.t. a model parameter with index $p$, $\nabla_l$ is the average gradient of target function w.r.t. a parameters in a layer with index $l$, and $\nabla_g$ is the global average gradient of target function w.r.t. all model parameters.
\newline
\begin{thm}[Convergence under mild assumptions about $f$]
Suppose that $f$ is convex and L-Lipschitz smooth with $\|\nabla_p f(\theta)\| < M_p$, $\|\nabla_l f(\theta)\| < M_l$, $\|\nabla_g f(\theta)\| < M_g$ for some fixed $M_p$, $M_l$, $M_g$ and all $\theta$. Then $\theta_t\rightarrow \theta^{*}$ if $\alpha_{\infty}<1/L$ where $L$ is the Lipschitz constant for all the gradients and $t\cdot\tau(t)\rightarrow 0 $ as $t \rightarrow \infty$, where the $\theta_t$ are generated according to (non-stochastic) gradient descent.
\label{thm:2}
\end{thm}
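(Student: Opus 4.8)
The plan is to extend the single-learning-rate convergence argument sketched for the global case \citep{baydin2017online} to the three-level combined step size. The essential mechanism is the damped learning rate $\kappa_{p,t} = \tau(t)\alpha^{*}_{p,t} + (1-\tau(t))\alpha_{\infty}$: although the adaptive step size $\alpha^{*}_{p,t}$ may drift upward, the factor $\tau(t)\to 0$ pulls the effective step size back toward the safe constant $\alpha_{\infty}$, and since $\alpha_{\infty}<1/L$ the iteration eventually behaves like gradient descent with a sufficiently small fixed step. I would organize the argument in four steps: (i) bound each level's adaptive learning rate linearly in $t$; (ii) transfer this bound to the combined rate $\alpha^{*}_{p,t}$ using convexity of the combination; (iii) use $t\cdot\tau(t)\to 0$ to show $\kappa_{p,t}\to\alpha_{\infty}$; and (iv) invoke the descent lemma for $L$-smooth convex $f$ once the effective step sizes fall below $1/L$.

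First I would bound each component. From the update rules in Algorithm~\ref{Alg:3.1}, $\alpha_{p,t}=\alpha_{p,t-1}-\beta_p\gamma_{1,t-1}h_{p,t}$, and because the combination weights are renormalized to satisfy $0\le\gamma_{i}\le 1$, together with $|h_{p,t}|\le M_p^2$ (a product of two per-parameter gradient magnitudes, each below $M_p$), telescoping yields $|\alpha_{p,t}|\le|\alpha_{p,0}|+t\beta_p M_p^2$. The identical reasoning with $M_l$ and $M_g$ gives linear-in-$t$ bounds for $|\alpha_{l,t}|$ and $|\alpha_{g,t}|$, absorbing the scaling $\beta_l=n_l\beta$, $\beta_g=n\beta$ into the constant. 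Since $\alpha^{*}_{p,t}=\gamma_{1}\alpha_{p,t}+\gamma_{2}\alpha_{l,t}+\gamma_{3}\alpha_{g,t}$ with $\gamma_1+\gamma_2+\gamma_3=1$ and all $\gamma_i\ge 0$, the combined rate is a convex combination of the three, so $|\alpha^{*}_{p,t}|\le A_0+Ct$ for constants $A_0$ and $C$ depending on the initial rates and on $\beta, M_p, M_l, M_g$. Consequently $\tau(t)|\alpha^{*}_{p,t}|\le\tau(t)A_0+C\,t\tau(t)\to 0$, and therefore $\kappa_{p,t}\to\alpha_{\infty}$ as $t\to\infty$. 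Because there are only finitely many parameters $p$, this convergence is uniform, so there exists $T$ with $\kappa_{p,t}<1/L$ for all $p$ and all $t\ge T$.

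The final step is the convergence of the preconditioned gradient descent $\theta_t=\theta_{t-1}-D_t\nabla f(\theta_{t-1})$ with diagonal $D_t=\mathrm{diag}(\kappa_{p,t})$. For $t\ge T$ the $L$-smoothness descent inequality gives $f(\theta_t)\le f(\theta_{t-1})-\nabla f(\theta_{t-1})^{T}D_t\bigl(I-\tfrac{L}{2}D_t\bigr)\nabla f(\theta_{t-1})$, and since every diagonal entry satisfies $0<\kappa_{p,t}<1/L$ the matrix $D_t(I-\tfrac{L}{2}D_t)$ is positive definite with entries bounded below by a positive constant. This produces a strict monotone decrease of $f(\theta_t)$ and summability of $\|\nabla f(\theta_t)\|^2$; combined with convexity of $f$ and boundedness of the iterates, the standard convex-analysis argument then forces $\nabla f(\theta_t)\to 0$ and hence $\theta_t\to\theta^{*}$.

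The main obstacle I anticipate is the parameter-wise (diagonal) nature of the effective step size: unlike the scalar global case, one must control a distinct $\kappa_{p,t}$ for each coordinate and verify that the preconditioned descent inequality retains a uniform positive lower bound on $D_t(I-\tfrac{L}{2}D_t)$. This is precisely where the finiteness of the parameter set and the uniform limit $\kappa_{p,t}\to\alpha_{\infty}<1/L$ are indispensable. A secondary technical point is using the three bounded-gradient hypotheses ($M_p$, $M_l$, $M_g$) consistently when bounding $h_{p,t}$, $h_{l,t}$, $h_{g,t}$, and confirming that the renormalization of the $\gamma_i$ in Algorithm~\ref{Alg:3.1} indeed keeps them in $[0,1]$ so that the convex-combination bound on $\alpha^{*}_{p,t}$ is valid.
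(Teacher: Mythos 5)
Your proposal follows essentially the same route as the paper's proof: telescoping each level's update to get an $O(t)$ bound, transferring it to $\alpha^{*}_{p,t}$ via the convex combination $\sum_i \gamma_i = 1$, using $t\cdot\tau(t)\to 0$ to force $\kappa_{p,t}\to\alpha_{\infty}<1/L$, and then concluding by standard convergence of gradient descent with step sizes eventually trapped below $1/L$. The only difference is cosmetic: where the paper delegates the final step to citations (\citealp{karimi2016linear,sun2019optimization}), you spell out the diagonal descent-lemma inequality and the uniformity over the finitely many coordinates, which is at least as rigorous as the paper's own treatment.
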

\begin{proof}
We take three-level's case discussed in Section~\ref{Sec:5.3} for example, which includes global level, layer-level and parameter-level. Suppose that the target function $f$ is convex, L-Lipschitz smooth in all levels, which gives for all $\theta_1$ and $\theta_2$:
\begin{equation}
\begin{split}
    &||\nabla_p f(\theta_1) - \nabla_p f(\theta_2) ||\leq L_p|| \theta_1 -\theta_2||\\
    &||\nabla_l f(\theta_1) - \nabla_l f(\theta_2) ||\leq L_l|| \theta_1 -\theta_2||\\
    &||\nabla_g f(\theta_1) - \nabla_g f(\theta_2) ||\leq L_g|| \theta_1 -\theta_2||\\
    &L = \max\{L_p, L_l, L_g\}
    \end{split}
\end{equation}
and its gradient with respect to parameter-wise, layer-wise, global-wise parameter groups satisfy $\|\nabla_p f(\theta)\| < M_p$, $\|\nabla_l f(\theta)\| < M_l$, $\|\nabla_g f(\theta)\| < M_g$ for some fixed $M_p$, $M_l$, $M_g$ and all $\theta$. Then the effective combined learning rate for each parameter satisfies: 
\begin{equation}
\begin{split}
    |\alpha^{*}_{p,t}| &= |\gamma_{p,t-1} \alpha_{p,t} + \gamma_{l,t-1} \alpha_{l,t} + \gamma_{g,t-1} \alpha_t|\\
    &\leq (\gamma_{p,t-1}+ \gamma_{l,t-1} + \gamma_{g,t-1})\alpha_0 + \beta\sum_{i=0}^{t-1}\left(\gamma_{p,t-1}n_p\max_p\{|\nabla f(\theta_{p,i+1})^T \nabla f(\theta_{p,i})|\}\right.\\
    &\left.+ \gamma_{l,t-1}n_l \max_l\{|\nabla f(\theta_{l,i+1})^T \nabla f(\theta_{l,i})|\} + \gamma_{g,t-1}|\nabla f(\theta_{g, i+1})^T \nabla f(\theta_{g, i})|\right)\\
    & \leq \alpha_0 + \beta\sum_{i=0}^{t-1}\left(\gamma_{p,t-1}n_p \max_p\{\|\nabla f(\theta_{p,i+1})\| \|\nabla f(\theta_{p,i})\|\}\right.\\
    &\left.+ \gamma_{l,t-1}n_l\max_l\{\|\nabla f(\theta_{l,i+1})\| \|\nabla f(\theta_{l,i})\|\} + \gamma_{g,t-1}\|\nabla f(\theta_{g, i+1})\| \| \nabla f(\theta_{g, i})\|\right)\\
    &\leq \alpha_0 +  t\beta (n_p M_p^2 + n_l M_l^2 + M_g^2)
     \end{split}
\end{equation}
where $\theta_{p,i}$ refers to the value of parameter indexed by $p$ at time step $i$, $\theta_{l,i}$ refers to the set/vector of parameters in layer with index $l$ at time step $i$, and $\theta_{g,i}$ refers to the whole set of model parameters at time step $i$. In addition, $n_p$ and $n_l$ are the total number of parameters and number of the layers, and we have applied $0<\gamma_p, \gamma_l, \gamma_g<1$. This gives an upper bound for the learning rate in each particular time step, which is $O(t)$ as $t\rightarrow \infty$. By introducing $\kappa_{p,t} = \tau(t)\alpha^{*}_{p,t} + (1-\tau(t))\alpha_{\infty}$, where the function $\tau(t)$ is selected to satisfy $t\tau(t)\rightarrow 0$ as $t\rightarrow \infty$, so we have $\kappa_{p,t}\rightarrow \alpha_{\infty}$ as $t\rightarrow \infty$. If $\alpha_{\infty}<\frac{1}{L}$, 
 for larger enough $t$, we have 
$1/(L+1)< \kappa_{p,t} <1/L$, and the algorithm converges when the corresponding gradient-based optimizer converges for such a learning rate under our assumptions about $f$. This follows the discussion in \citep{karimi2016linear, sun2019optimization}.
\end{proof}
\noindent When we introduce $\kappa_{p,t}$ instead of $\alpha^{*}_{p,t}$ in Algorithm ~\ref{Alg:3.1}, the corresponding gradients $\frac{\partial L(\theta)}{\partial \alpha^{*}_{p, t-1}}$ will also be replaced by $\frac{\partial L(\theta)}{\partial \kappa^{*}_{p,t-1}}\frac{\partial \kappa^{*}_{p,t-1}}{\partial \alpha^{*}_{p, t-1}} = \frac{\partial L(\theta)}{\partial \kappa^{*}_{p,t-1}}\tau(t)$. 

\begin{thm}[Convergence of Adabound-CAM-HD]
Let $\{\theta_t\}$ and $\{V_t\}$ be the sequences obtained from the modified Algorithm~\ref{Alg:3.1} for Adabound-CAM-HD discussed in Section~\ref{Sec:5.3.4}. The optimizer parameters in Adam satisfy $\beta_1=\beta_{11}$, $\beta_{1t}\leq \beta_1$ for all $t\in [T]$ and $\beta_1<\sqrt{\beta_2}$. Suppose $f$ is a convex target function on $\Theta$, $\eta_l(t)$ and $\eta_u (t)$ are the lower and upper bound function, $\eta_l(t+1)\geq \eta_l(t)>0$, $\eta_u(t+1)<\eta_u (t)$. As $t\rightarrow\infty$, $\eta_l(t)\rightarrow \alpha^{*}$, $\eta_u(t)\rightarrow \alpha^{*}$. $L_{\infty}=\eta_l(1)$ and $R_{\infty}=\eta_u(1)$. Assume that $||\theta_1-\theta_2||_{\infty}\leq D_{\infty}$ for all $\theta_1, \theta_2 \in \Theta$ and $||\nabla f_t(\theta)||\leq G_2$ for all $t\in [T]$ and $\theta \in \Theta$. For $\theta_t$ generated using Adabound-CAM-HD algorithm, the regret function $R(T) = \sum_{t=1}^T f_t(\theta_t)-\min_{\theta\in \Theta}\sum_{t=1}^Tf_t(\theta)$ is upper bounded by $O(\sqrt{T})$. 
\end{thm}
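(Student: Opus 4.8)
The plan is to reduce the regret analysis to the standard online convex optimization framework established for AdaBound \citep{luo2019adaptive}, since Adabound-CAM-HD differs from AdaBound only in how the pre-clipping step size $\alpha^{*}$ is produced. The essential observation is that the clipping operation in Eq.~\eqref{Eq:clip} decouples the regret bound from the internal CAM-HD dynamics: no matter how the multi-level combination $\alpha^{*}$ evolves -- recall from Theorem~\ref{thm:2} that $\alpha^{*}_{p,t}$ may grow like $O(t)$ -- the effective per-coordinate learning rate $\hat{\eta}=\mathrm{Clip}(\alpha^{*}/\sqrt{V_t},\eta_l,\eta_u)$ always lies in $[\eta_l(t),\eta_u(t)]$, so the final step size $\eta_t=\hat{\eta}/\sqrt{t}$ is squeezed between $\eta_l(t)/\sqrt{t}$ and $\eta_u(t)/\sqrt{t}$. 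This bounded, vanishing step size is precisely the property that the AdaBound regret proof exploits.

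First I would fix an arbitrary comparator $\theta^{*}\in\argmin_{\theta\in\Theta}\sum_{t}f_t(\theta)$ and use convexity to pass from the regret to the linearized surrogate $f_t(\theta_t)-f_t(\theta^{*})\leq g_t^{T}(\theta_t-\theta^{*})$ with $g_t=\nabla f_t(\theta_t)$. Next I would expand the projected update $\theta_{t+1}=\Pi_{\Theta}\big(\theta_t-\eta_t\odot m_t\big)$, square the distance to $\theta^{*}$, and rearrange to bound $g_t^{T}(\theta_t-\theta^{*})$ by a telescoping distance term $\tfrac{1}{2}\big(\|\theta_t-\theta^{*}\|^2-\|\theta_{t+1}-\theta^{*}\|^2\big)$ weighted by $\sqrt{V_t}/\eta_t$, plus a residual gradient term. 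The momentum $m_t=\phi_t(g_1,\dots,g_t)$ is handled by the standard decomposition under the hypotheses $\beta_{1t}\leq\beta_1$ and $\beta_1<\sqrt{\beta_2}$, which keep the momentum contribution summable.

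The heart of the argument is a monotonicity lemma: for each coordinate $i$, the quantity $\sqrt{V_{t,i}}/\eta_{t,i}$ is non-decreasing in $t$. This is exactly where the AdaBound assumptions $\eta_l(t+1)\geq\eta_l(t)$ and $\eta_u(t+1)<\eta_u(t)$ enter, ensuring that the clipped step sizes shrink in a controlled, monotone fashion so the weighted distance terms telescope to a single term bounded via the domain diameter $\|\theta_1-\theta_2\|_\infty\leq D_\infty$. The residual gradient term is then controlled using $\|\nabla f_t(\theta)\|\leq G_2$ together with the step-size upper bound $\eta_u(t)/\sqrt{t}$; since $\sum_{t=1}^{T}1/\sqrt{t}=O(\sqrt{T})$, both contributions are $O(\sqrt{T})$, with the final constant depending on $D_\infty$, $G_2$, $L_\infty=\eta_l(1)$ and $R_\infty=\eta_u(1)$.

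I expect the main obstacle to be the bookkeeping that certifies the monotonicity of $\sqrt{V_{t,i}}/\eta_{t,i}$ when $\alpha^{*}$ is itself adapting. In vanilla AdaBound the bounds $\eta_l,\eta_u$ are the only source of variation, whereas here $\alpha^{*}/\sqrt{V_t}$ can oscillate before clipping; the careful point is to verify that, after clipping, the effective rate inherits the monotone envelope from $\eta_l(t),\eta_u(t)$ \emph{independently} of $\alpha^{*}$. Once this lemma is secured, the remaining steps are a direct transcription of the AdaBound regret proof \citep{luo2019adaptive,reddi2019convergence}, and the $O(\sqrt{T})$ bound follows.
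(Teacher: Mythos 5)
Your overall strategy is exactly the paper's: the clipping in Eq.~\eqref{Eq:clip} guarantees $L_{\infty}\leq \sqrt{t}\,\|\eta_t\|_{\infty}\leq R_{\infty}$ no matter how the CAM-HD combination $\alpha^{*}$ evolves, so the regret analysis of \citet{luo2019adaptive} is invoked wholesale. The genuine gap is in what you call the heart of the argument: the monotonicity lemma for the coordinate-wise inverse effective step size does not go through, and you correctly flag it as the obstacle but your plan to ``secure'' it from the stated hypotheses would fail. The envelope conditions $\eta_l(t+1)\geq\eta_l(t)$ and $\eta_u(t+1)<\eta_u(t)$ only constrain the band; the pre-clipping quantity $\alpha^{*}/\sqrt{V_t}$ can oscillate arbitrarily within $[\eta_l(t),\eta_u(t)]$, so the clipped rate can sit near the upper envelope at step $t-1$ and near the lower envelope at step $t$. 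The positive parts of the increments of $1/\eta_{t,i}$ then sum to something potentially much larger than the telescoped endpoint difference, and the weighted-distance terms no longer collapse against $D_{\infty}^2$. Crucially, this is not an artifact of the CAM-HD modification: it is the known flaw in the original AdaBound regret proof identified by \citet{savarese2019convergence}, and the paper handles it not by proving your lemma but by importing the corrected assumption $\frac{t}{\eta_l(t)}-\frac{t-1}{\eta_u(t-1)}\leq M$ for all $t\in[T]$, which bounds the cumulative non-telescoping contribution directly and is satisfied by the specific bound functions $\eta_l(t)=0.1-\frac{0.1}{(1-\beta_2)t+1}$ and $\eta_u(t)=0.1+\frac{0.1}{(1-\beta_2)t+1}$ used in Section~\ref{Sec:5.3.4}.

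The rest of your sketch --- linearization by convexity, the momentum decomposition under $\beta_{1t}\leq\beta_1<\sqrt{\beta_2}$, and controlling the residual term via $G_2$ and $\sum_{t=1}^{T}1/\sqrt{t}=O(\sqrt{T})$ --- is a faithful transcription of the AdaBound proof and matches the paper's intent, since the paper simply cites that proof rather than reproducing it. To repair your write-up, replace the monotonicity lemma by the Savarese condition as an explicit hypothesis (or verify it for the concrete $\eta_l,\eta_u$ above), and add the paper's closing caveat: after clipping, the effective per-parameter rate is no longer $\alpha^{*}_{p,t}/\sqrt{V_t}$, so the updating rules for the other internal variables of Algorithm~\ref{Alg:3.1} (the hyper-gradients $h_{p,t}$, $h_{l,t}$, $h_{g,t}$ and the combination weights $\gamma_i$) must be adjusted accordingly for the imported regret analysis to describe the algorithm actually being run.
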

Due to the clipping procedure in Eq.~\eqref{Eq:clip}, the $\eta_t$ for parameter updating satisfies $L_{\infty}\leq \sqrt{t}||\eta_t||_{\infty}\leq R_{\infty}$. Hence, the proof of convergence of Adabound in \citep{luo2019adaptive} is also valid for Adabound-CAM-HD, ensuring that it achieves a high level of adaptiveness with a good convergence property. Notice that in \citep{savarese2019convergence}, it is recommended to suppose $\frac{t}{\eta_l(t)}-\frac{t-1}{\eta_u(t-1)}\leq M$ for all $t\in [T]$ as a correction. As the effective parameter-wise updating rates and corresponding gradients may change after clipping, the updating rules for other variable should be adjusted accordingly. 

\section{Experiments} \label{Sec:5.4}

We use the feed-forward neural network models and different types of convolutions neural networks on multiple benchmark datasets to compare with existing baseline optimizers.
For each learning task, the following optimizers will be applied: (a) standard baseline optimizers such as Adam and SGD; (b) hyper-gradient descent in \citep{baydin2017online}; (c) L4 stepsize adaptation for standard optimizers \citep{rolinek2018l4}; (d) Adabound optimizer \citep{luo2019adaptive}; (e) RAdam optimizer \citep{liu2019variance}; and (f) the proposed adaptive combination of different levels of hyper-descent. The implementation of (b) is based on the code provided with the original paper. One NVIDIA Tesla V100 GPU with 16G Memory 61 GB RAM and two Intel Xeon 8 Core CPUs with 32 GB RAM are applied. The program is built in Python 3.5.1 and Pytorch 1.0 \citep{subramanian2018deep}. For each experiment, we provide both the average curves and standard error bars for ten runs.

\subsection{Hyper-parameter Tuning}\label{Sec:5.4.1}

To compare the effect of CAM-HD with baseline optimizers, we first do hyperparameter tuning for each learning task by referring to related papers \citep{kingma2014adam, baydin2017online, rolinek2018l4, luo2019adaptive} as well as implementing an independent grid search \citep{bergstra2011algorithms, feurer2019hyperparameter}. We mainly consider hyper-parameters including batch size, learning rate, and other optimizer parameters for models with different architectures. Other settings in our experiments follow open-source benchmark models. The search space for batch size is the set of $\{2^n\}_{n=3,...,9}$, while the search space for learning rate, hyper-gradient updating rate and combination weight updating rate (CAM-HD-lr) are $\{10^{-1},10^{-2},...,10^{-4}\}$, $\{10^{-1},10^{-2},...,10^{-10}\}$ and $\{0.1, 0.03, 0.01, 0.003, 0.001, 0.0003, 0.0001\}$, respectively. 
The selection criterion is the 5-fold cross-validation loss by early-stopping at the patience of 3 \citep{prechelt1998early}. The optimized hyper-parameters for the tasks in this paper are given in Table~\ref{tab:2}. For training ResNets with SGDN, we will apply a step-wise learning rate decay schedule as in \citep{luo2019adaptive, liu2019variance}. Notice that although the hyper-parameters are tuned, it does not mean that the model performance is sensitive to each hyper-parameter.
\begin{table*}[htbp]
  \centering{\scriptsize
  \caption{Hyperparameter Settings for Experiments}
    \begin{tabular}{cccccccc}
    \toprule
    \multicolumn{1}{p{5.5em}}{Architecture} & \multicolumn{1}{p{4.585em}}{Dataset} & \multicolumn{1}{p{3em}}{Batch size} & \multicolumn{1}{p{3.5em}}{lr (SGD/SGDN)} & \multicolumn{1}{p{5em}}{lr (Adam)} & \multicolumn{1}{p{6em}}{Hyper-grad lr (SGD/SGDN)} & \multicolumn{1}{p{6em}}{Hyper-grad lr (Adam)} & \multicolumn{1}{p{5em}}{CAM-HD-lr} \\
    \midrule
    MLP 1 & \multirow{3}[2]{*}{MNIST} & 32    &   -    & 0.0003 &   -   & 1.00E-07 & 0.01 \\
    MLP 2 &       & 64    &   -    & 0.001 &    -   & 1.00E-07 & 0.01 \\
    MLP 3 &       & 128   &   -    & 0.001 &    -   & 1.00E-07 & 0.01 \\
    \midrule
    \multirow{3}[2]{*}{LeNet-5} & MNIST & 256   &   -    & 0.001 & 1.00E-03 & 1.00E-08 & 0.03 \\
          & CIFAR10 & 256   &   -   & 0.001 & 1.00E-03 & 1.00E-08 & 0.03 \\
          & SVHN  & 128   &   -   & 0.001 & 1.00E-03 & 1.00E-08 & 0.03 \\
    \midrule
    ResNet-18 & \multirow{2}[2]{*}{CIFAR10} & 256   & 0.1   & 0.001 & 1.00E-06 & 1.00E-08 & 0.001 \\
    ResNet-34 &       & 256   & 0.1   & 0.001 & 1.00E-06 & 1.00E-08 & 0.001 \\
    \bottomrule
    \end{tabular}
  \label{tab:2}}
\end{table*}
For training ResNets with SGDN, we will apply a step-wise learning rate decay schedule as in \citep{luo2019adaptive, liu2019variance}. Notice that although the hyper-parameters are tuned, it does not mean that the model performance is sensitive to each of them.

\subsection{Combination Ratio and Model Performances} \label{Sec:5.4.2}
First, we perform a study on the initialization of the combination weights different level learning rates in the framework of CAM-HD. The simulations are based on image classification tasks on MNIST and CIFAR10 \citep{lecun1998gradient, krizhevsky2009learning}. We use full training sets of MNIST and CIFAR10 for training and full test sets for validation. One feed-forward neural network with three hidden layers of size [100, 100, 100] and two convolutional network models, including LeNet-5 \citep{lecun2015lenet} and ResNet-18 \citep{he2016deep}, are implemented. In each case, two levels of learning rates are considered, which are the global and layer-wise adaptation for FFNN, and global and filter-wise adaptation for CNNs. For LeNet-5 and FFNN, Adam-CAM-HD with fixed and trainable combination weights is implemented, while for ResNet-18, both Adam-CAM-HD and SGDN-CAM-HD with fixed and trainable combination weights are implemented in two independent simulations. We change the initialized combination weights of two levels in each case to see the change of model performance in terms of test classification accuracy at epoch 30 for FFNN, and at epoch 10 for LeNet-5 and ResNet-18. Also we compare CAM-HD methods with baseline Adam and SGDN methods in terms of test accuracy after the same epochs of training. Other hyper-parameters are optimized based on Section~\ref{Sec:5.4.1}. We conduct 10 runs at each combination ratio and draw the average accuracies and corresponding error bars (standard errors). The result is given in Figure~\ref{Fig:1}, 
\begin{figure*}[th] 
\begin{center}
 \includegraphics[width=1.0\linewidth]{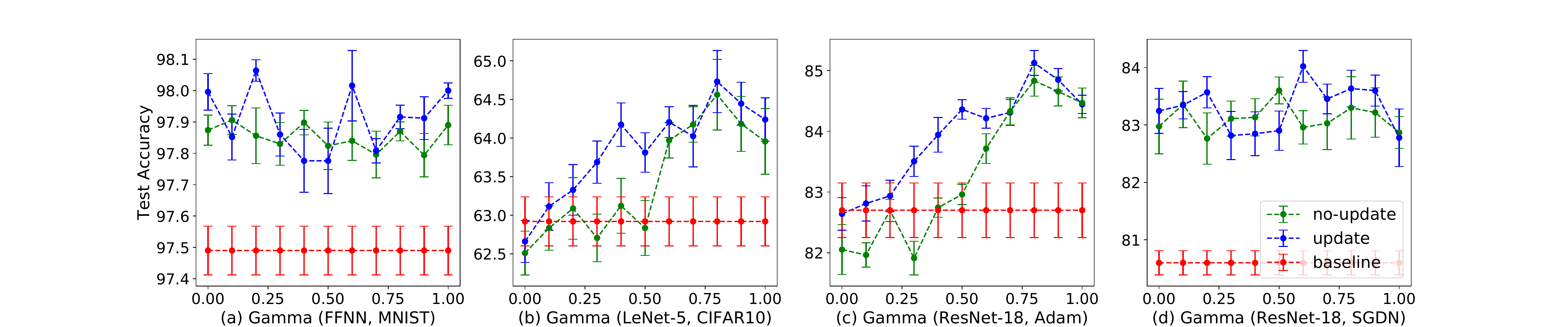}
 \caption{The diagram of model performances trained by Adam/SGDN-CAM-HD with different combination ratios in the case of two-level learning rates adaptation. The x-axis is the ratio of global-level adaptive learning rates. ResNet-18s are trained for 10 epochs only.} \label{Fig:1}
\end{center} 
\end{figure*}
which leads to the following findings: First, usually the optimal performance is neither at full global level nor full layer/filter level, but a weighted combination of two levels of adaptive learning rates, for both update and no-update cases. Second, CAM-HD methods outperform baseline Adam/SGDN methods for most of the combination ratios initializations. Third, updating of combination weights is effective and helpful in achieving better performance than applying fixed combination weights. 
This supports our analysis in Section~\ref{Sec:5.3.3}. Also, in real training processes, it is possible that the learning in favor of different combination weights in various stages and this requires the online adaptation of the combination weights.

\subsection{Feed Forward Neural Network for Image Classification}\label{Sec:5.4.3}
This experiment is conducted with feed-forward neural networks for image classification on MNIST, including 60,000 training examples and 10,000 test examples. We use the full training set for training and the full test set for validation. Three FFNN with three different hidden layer configurations are implemented \citep{svozil1997introduction, fine2006feedforward}, including [100, 100], [1000, 100], and [1000, 1000]. Adaptive optimizers including Adam, Adabound, Adam-HD with two hyper-gradient updating rates, and proposed Adam-CAM-HD are applied. For Adam-CAM-HD, we apply three-level parameter-layer-global adaptation with initialization of $\gamma_1=\gamma_2=0.3$ and $\gamma_3=0.4$, and two-level layer-global adaptation with $\gamma_1=\gamma_2=0.5$. No decay function of learning rates is applied.
\begin{figure*}[t] 
\begin{center}
 \includegraphics[width=1.0\linewidth]{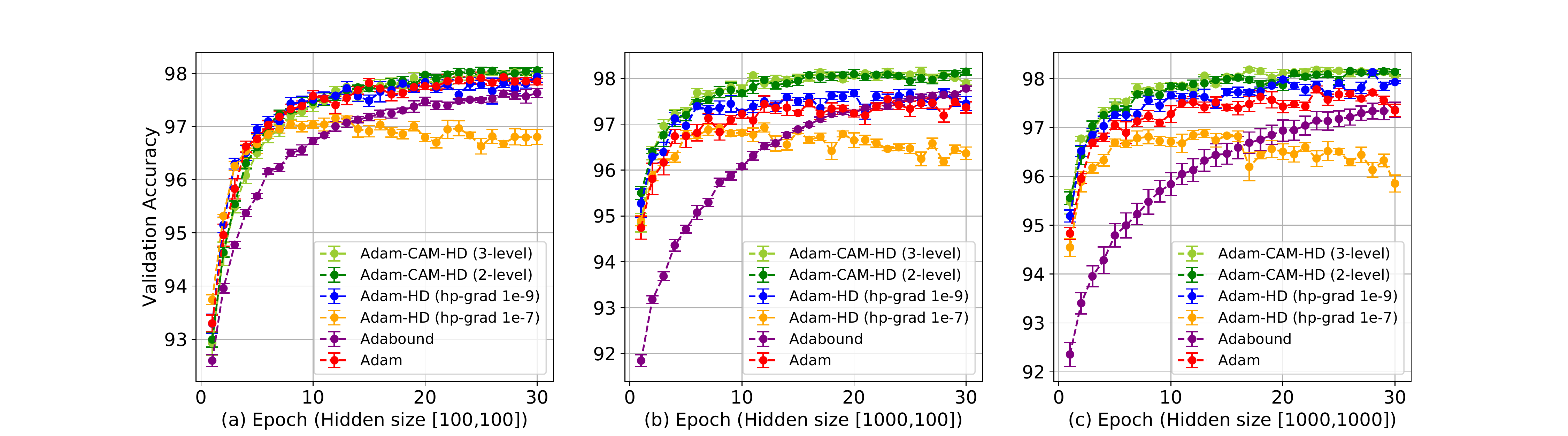}
 \caption{The comparison of learning curves of FFNN on MNIST with different adaptive optimizers.} \label{Fig:2}
\end{center} 
\end{figure*}
Figure~\ref{Fig:2} shows the validation accuracy curves for different optimizers during the training process of 30 epochs. We can learn that both the two-level and three-level Adam-CAM-HD outperform the baseline Adam optimizer with optimized hyper-parameters significantly. For Adam-HD, we find that the default hyper-gradient updating rate ($\beta=10^{-7}$) for Adam applied in \citep{baydin2017online} is not optimal in our experiments, while an optimized one of $10^{-9}$ can outperform Adam but still worse than Adam-CAM-HD with $\beta=10^{-7}$. 

The test accuracy of each setting and the corresponding standard error of the sample mean in 10 trials are given in Table~\ref{tab:3}.
\begin{table}[htbp]
  \centering{\footnotesize
  \caption{Summary of test performances with FFNNs.}
    \begin{tabular}{ccccccc}
    \toprule
          & \multicolumn{2}{c}{FFNN(100, 100)} & \multicolumn{2}{c}{FFNN(1000, 100)} & \multicolumn{2}{c}{FFNN(1000, 1000)} \\
\cmidrule{2-7}          & Test acc & Test S.E & Test acc & Test S.E & Test acc & Test S.E \\
    \midrule
    Adam-CAM-HD (3-level) & 97.91 & 0.07  & 97.92 & 0.15  & 98.29 & 0.07 \\
    Adam-CAM-HD (2-level) & \textbf{98.12} & 0.06  & \textbf{98.09} & 0.06  & \textbf{98.39} & 0.04 \\
    Adam-HD (hp-grad 1e-9) & 97.86 & 0.07  & 97.19 & 0.26  & 97.83 & 0.12 \\
    Adam  & 97.93 & 0.09  & 97.48 & 0.14  & 97.49 & 0.11 \\
    \bottomrule
    \end{tabular}
  \label{tab:3}}
\end{table}

\subsection{Lenet-5 for Image Classification}\label{Sec:5.4.4}
The second experiment is done with LeNet-5, a classical convolutional neural network without involving many building and training tricks \citep{lecun2015lenet}. We compare a set of adaptive Adam optimizers including Adam, Adam-HD, Adam-CAM-HD, Adabound, RAdam and L4 for the image classification learning task of MNIST, CIFAR10 and SVHN \citep{netzer2011reading}. For Adam-CAM-HD, we apply a two-level setting with filter-wise and global learning rates adaptation and initialize $\gamma_1=0.2$, $\gamma_2=0.8$. We also implement an exponential decay function $\tau(t)=\exp(-r t)$ as was discussed in Section~\ref{Sec:5.3.5} with rate $r=0.002$ for all the three datasets, while $t$ is the number of iterations. For L4, we implement the recommended L4 learning rate of 0.15. For Adabound and RAdam, we also apply the recommended hyper-parameters in the original papers. The other hyper-parameter settings are optimized in Section~\ref{Sec:5.4.1}.
\begin{figure*}[th] 
\begin{center}
 \includegraphics[width=1.0\linewidth]{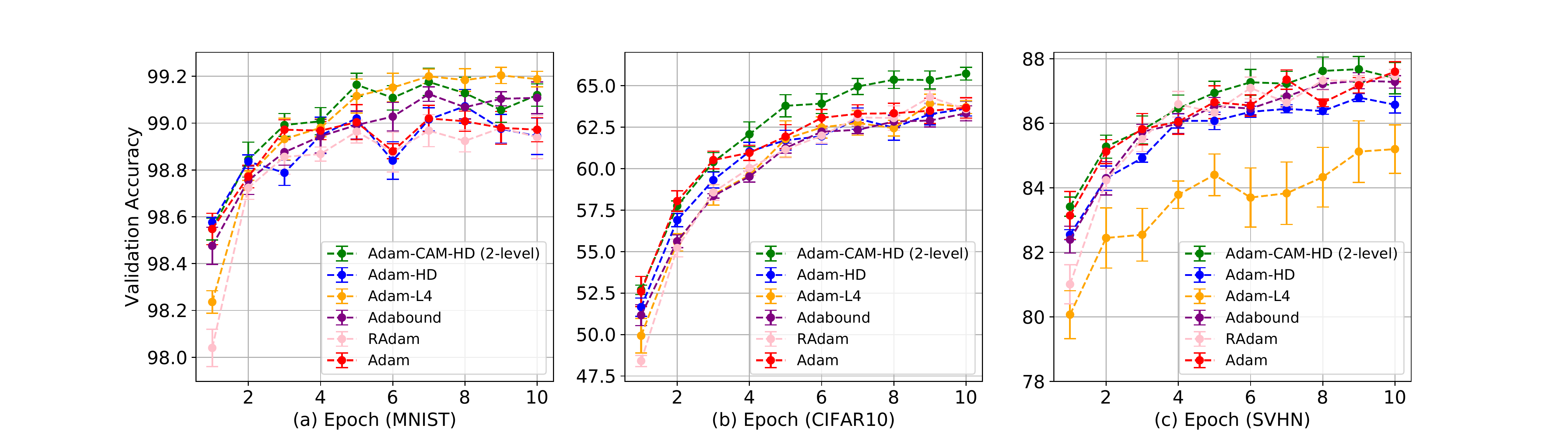}
 \caption{The comparison of learning curves of training LeNet-5 with different adaptive optimizers.} \label{Fig:3}
\end{center} 
\end{figure*}
As we can see in Figure~\ref{Fig:3}, Adam-CAM-HD again shows the advantage over other methods in all the three sub-experiments, except MNIST L4 that could perform better in a later stage. The experiment on SVHN indicates that the recommended hyper-parameters for L4 could fail in some cases with unstable accuracy curves. RAdam and Adabound outperform baseline Adam method on MNIST, while Adam-HD does not show a significant advantage over Adam with optimized hyper-gradient updating rate that is shared with Adam-CAM-HD.
The corresponding summary of test performance is given in Table~\ref{tab:4}, in which the test accuracy of Adam-CAM-HD outperform other optimizers on both CIFAR10 and SVHN. Especially, it gives significantly better results than Adam and Adam-HD for all the three datasets.
\begin{table}[htbp]
  \centering{\footnotesize
  \caption{Summary of test performances with LeNet-5}
    \begin{tabular}{ccccccc}
    \toprule
          & \multicolumn{2}{c}{MNIST} & \multicolumn{2}{c}{CIFAR10} & \multicolumn{2}{c}{SVHN} \\
\cmidrule{2-7}          & Test acc & Test S.E & Test acc & Test S.E & Test acc & Test S.E \\
\cmidrule{2-7}    Adam-CAM-HD & 98.93 & 0.07  & \textbf{65.55} & 0.18  & \textbf{87.58} & 0.37 \\
    Adam-HD & 98.83 & 0.05  & 63.3  & 0.66  & 86.94 & 0.13 \\
    Adam-L4 & \textbf{99.19} & 0.05  & 63.76 & 0.26  & 85.44 & 0.42 \\
    Adabound & 99.11 & 0.05  & 64.06 & 0.36  & 87.22 & 0.14 \\
    RAdam & 98.94 & 0.06  & 63.91 & 0.34  & 87.31 & 0.41 \\
    Adam  & 98.89 & 0.05  & 63.88 & 0.45  & 86.82 & 0.16 \\
    \bottomrule
    \end{tabular}
  \label{tab:4}}
\end{table}

\subsection{ResNet for Image Classification}\label{Sec:5.4.5}
In the third experiment, we apply ResNets for image classification task on CIFAR10 \citep{he2016deep, devries2017improved} following the code provided by \url{github.com/kuangliu/pytorch-cifar}. We compare Adam and Adam-based adaptive optimizers, as well as SGD with Nestorov momentum (SGDN) and corresponding adaptive optimizers for training both ResNet-18 and ResNet-34. For SGDN methods, we apply a learning rate schedule, in which the learning rate is initialized to a default value of 0.1 and reduced to 0.01 or 10\% (for SGDN-CAM-HD) after epoch 150. The momentum is set to be 0.9 for all SGDN methods. For Adam-CAM-HD SGDN-CAM-HD, we apply two-level CAM-HD with the same setting as the second experiment. We also implement Adabound-CAM-HD discussed in Section~\ref{Sec:5.3.4} by sharing the common parameters with Adabound. In addition, we apply an exponential decay function with a decay rate $r=0.001$ for all the CAM-HD methods. The learning curves for validation accuracy, training loss, and validation loss of ResNet-18 and ResNet-34 are shown in Figure~\ref{Fig:4}.
\begin{figure*}[th] 
\begin{center}
 \includegraphics[width=1.0\linewidth]{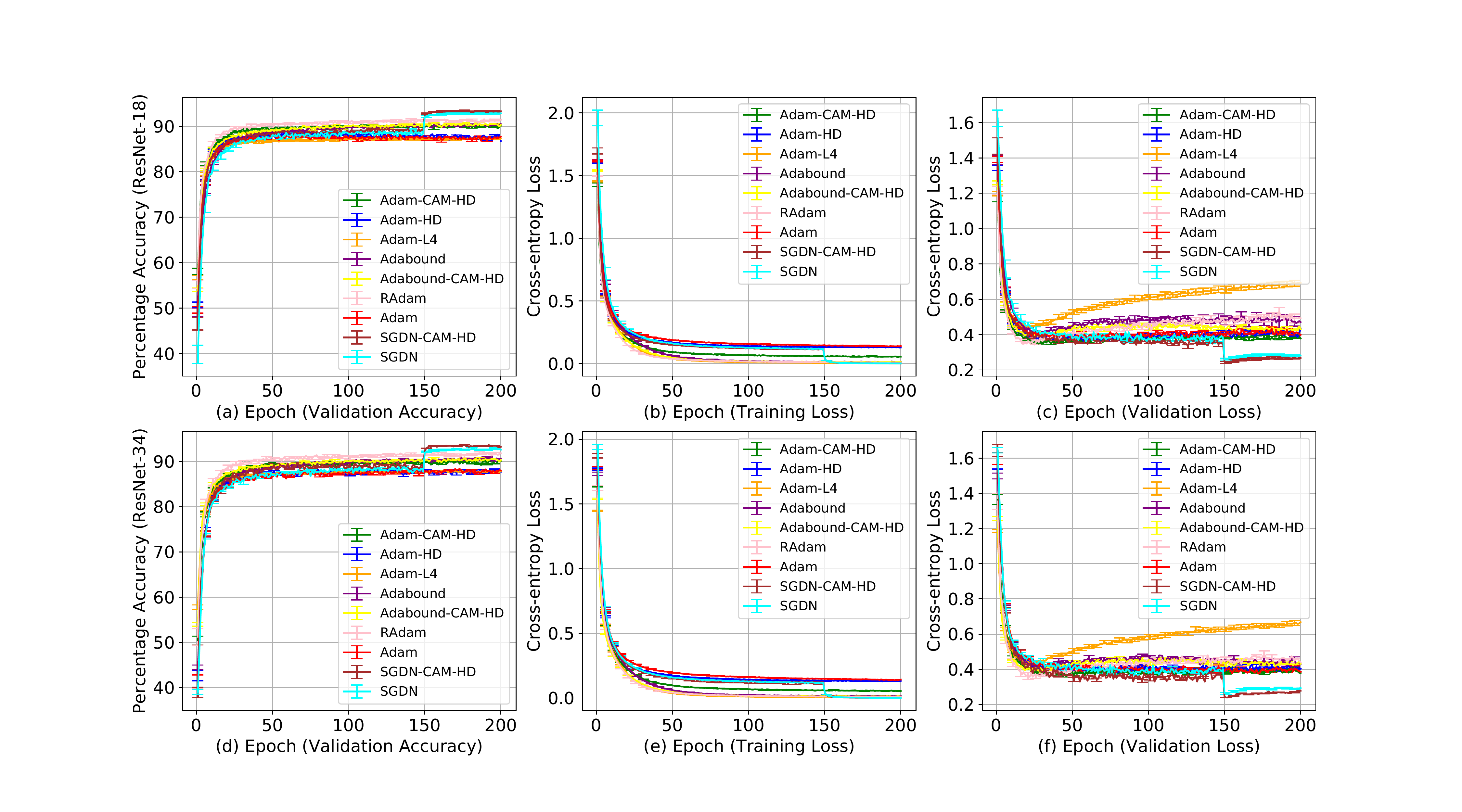}
 \caption{The learning curves of training ResNet-18/34 on CIFAR10 with adaptive optimizers.}\label{Fig:4}
\end{center} 
\end{figure*}
We can see that the validation accuracy of Adam-CAM-HD reaches about 90\% in 40 epochs and consistently outperforms Adam, L4 and Adam-HD optimizers in a later stage. The L4 optimizer with recommended hyper-parameter and an optimized weight-decay rate of 0.0005 (instead of 1e-4 applied in other Adam-based optimizers) can outperform baseline Adam for both ResNet-18 and ResNet-34, while its training loss outperforms all other methods but with potential over-fitting. Adam-HD achieves a similar or better validation accuracy than Adam with an optimized hyper-gradient updating rate of $10^{-9}$. RAdam performs slightly better than Adam-CAM-HD in terms of validation accuracy, but the validation cross-entropy of both RAdam and Adabound are outperformed by our method. Also, we find that in training ResNet-18/34, the validation accuracy and validation loss of SGDN-CAM-HD slightly outperform SGDN in most epochs even after the resetting of the learning rate at epoch 150. 
The test performances (average accuracy and standard error) of different optimizers for ResNet-18 and ResNet-34 after 200 epoch of training are shown in Table~\ref{tab:5}\footnote{Here Adam-based methods achieve much lower test accuracies as we only apply learning rate schedules to SGDN and SGDN-CAM-HD.}. 
We can learn that for both ResNet-18 and ResNet-34, the proposed CAM-HD methods (Adam-CAM-HD, Adabound-CAM-HD and SGDN-CAM-HD) can improve the corresponding baseline methods (Adam, Adabound and SGDN) with statistical significance. Especially, Adabound-CAM-HD outperforms both Adam-CAM-HD and Adabound.
\begin{table}[htbp]
  \centering{\small
  \caption{Summary of test performances with ResNet-18/34}
    \begin{tabular}{ccc}
    \toprule
      \textbf{Method}    & \textbf{ResNet-18} & \textbf{ResNet-34} \\
    \midrule
    \textbf{Adam}  & 87.03 (0.15) & 87.95 (0.22) \\
    \textbf{Adam-HD} & 87.26 (0.35) & 88.48 (0.48) \\
    \textbf{Adam-CAM-HD} & \textbf{90.31 (0.25)} & \textbf{90.28 (0.09)} \\
    \midrule
    \textbf{Adabound} & 90.29 (0.15) & 90.15 (0.30) \\
    \textbf{Adabound-CAM-HD} & \textbf{90.49 (0.31)} & \textbf{91.12 (0.23)} \\
    \midrule
    \textbf{SGDN}  & 93.04 (0.21) & 92.93 (0.29) \\
    \textbf{SGDN-CAM-HD} & \textbf{93.35 (0.08)} & \textbf{93.47 (0.23)} \\
    \bottomrule
    \end{tabular}%
  \label{tab:5}}%
\end{table}%



\section{Discussion} \label{Sec:5.5}

The experiments on both small models and large models demonstrate the advantage of the proposed method over baseline optimizers in terms of validation and test accuracy. One explanation of the performance improvement of our method is that it achieves a higher level of adaptation by introducing hierarchical learning rate structures with learn-able combination weights, while the over-parameterization of adaptive learning rates is controlled by its intrinsic regularization effects. In addition, experiments show that the performance improvement does not require tuning the hyper-parameters independently if the task or model is similar. For example, the hyper-gradient updating rate for LeNet-5, ResNet-18 and ResNet-34 are all set to be 1e-8 in our experiments no matter the dataset being learned. Also, the hyper-parameter CAM-HD-lr is shared among each group of models (FFNNs, LeNet-5, ResNets) for all datasets being learned. 
For the combination ratio, $\gamma_1=0.2$, $\gamma_2=0.8$ works for all our experiments with convolutional networks. However, as the loss surface with respect to the combination weights may not be convex for deep learning models, the learning of combination weights may fall into local optimal. Therefore, it is possible that several trials are needed to find a good initialization of combination weights although the learning of combination weights works locally \citep{feurer2019hyperparameter}. In general, the selected hyper-parameters are transferable to a similar task for an improvement from the corresponding baseline, while the optimal hyper-parameter setting may shift a bit. 

The proposed CAM-HD method can also apply learning rate schedules in many ways to achieve further improvement. One example is our ResNet experiment on CIFAR10 with SGDN and SGDN-CAM-HD. For more advanced learning rate schedules \citep{lang2019using, ge2019step}, we can apply strategies like piece-wise adaptive scheme by re-initialize all the levels for different steps. Another method is to replace global level learning rate with scheduled learning rate, while adapting the combination weights and other levels continuously.

\subsection{Learning of combination weights}\label{Sec:5.5.1}
The following figures including Figure~\ref{Fig:5}, Figure~\ref{Fig:6}, Figure~\ref{Fig:7} and Figure~\ref{Fig:8} give the learning curves of combination weights with respect to the number of training iterations in each experiments, in which each curve is averaged by 5 trials with error bars. Through these figures, we can compare the updating curves with different models, different datasets and different CAM-HD optimizers. 
\begin{figure}[th] 
\begin{center}
 \includegraphics[width=1.0\linewidth]{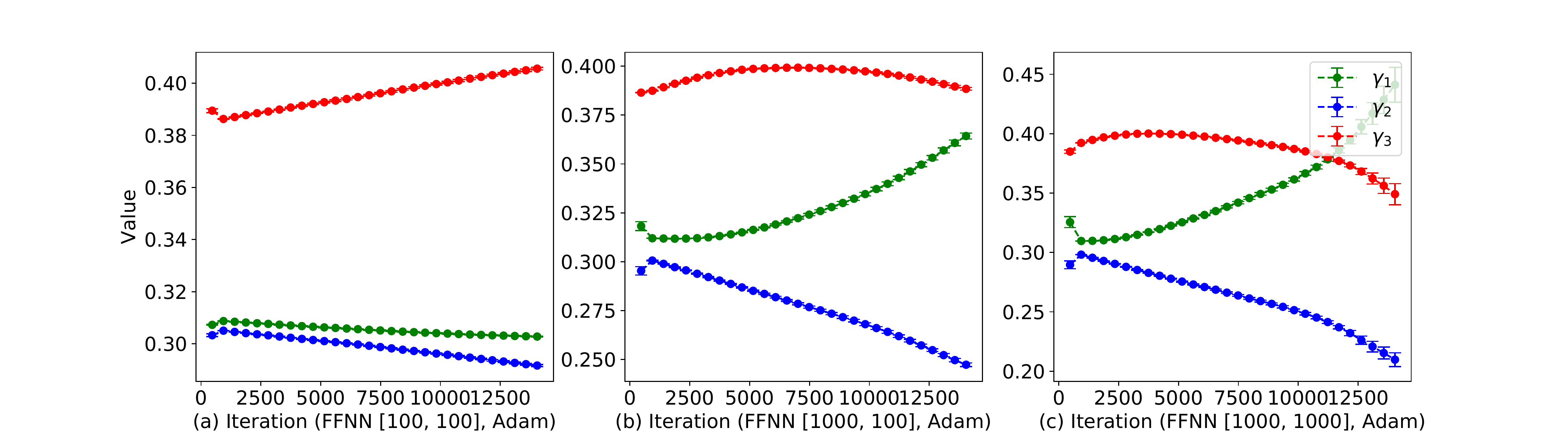}
 \caption{Learning curves of $\gamma$s for FFNN on MNIST with Adam.} \label{Fig:5} 
\end{center} 
\end{figure}
\begin{figure}[th] 
\begin{center}
 \includegraphics[width=1.0\linewidth]{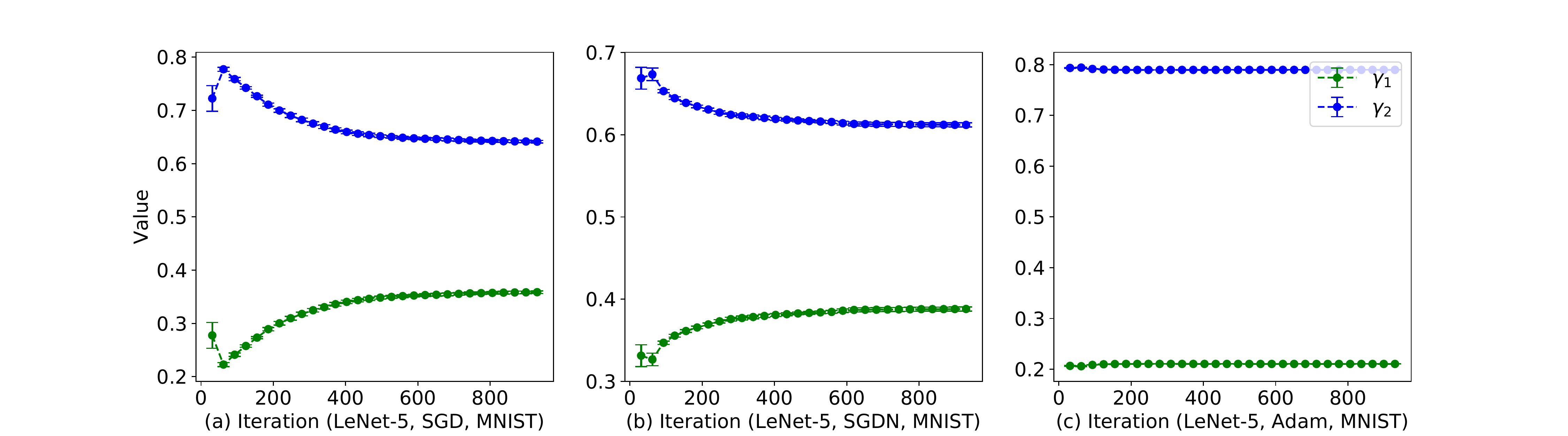}
 \caption{Learning curves of $\gamma$s for LeNet-5 on MNIST with SGD, SGDN and Adam ($\tau = 0.002$).} \label{Fig:6} 
\end{center} 
\end{figure}
\begin{figure}[th] 
\begin{center}
 \includegraphics[width=0.7\linewidth]{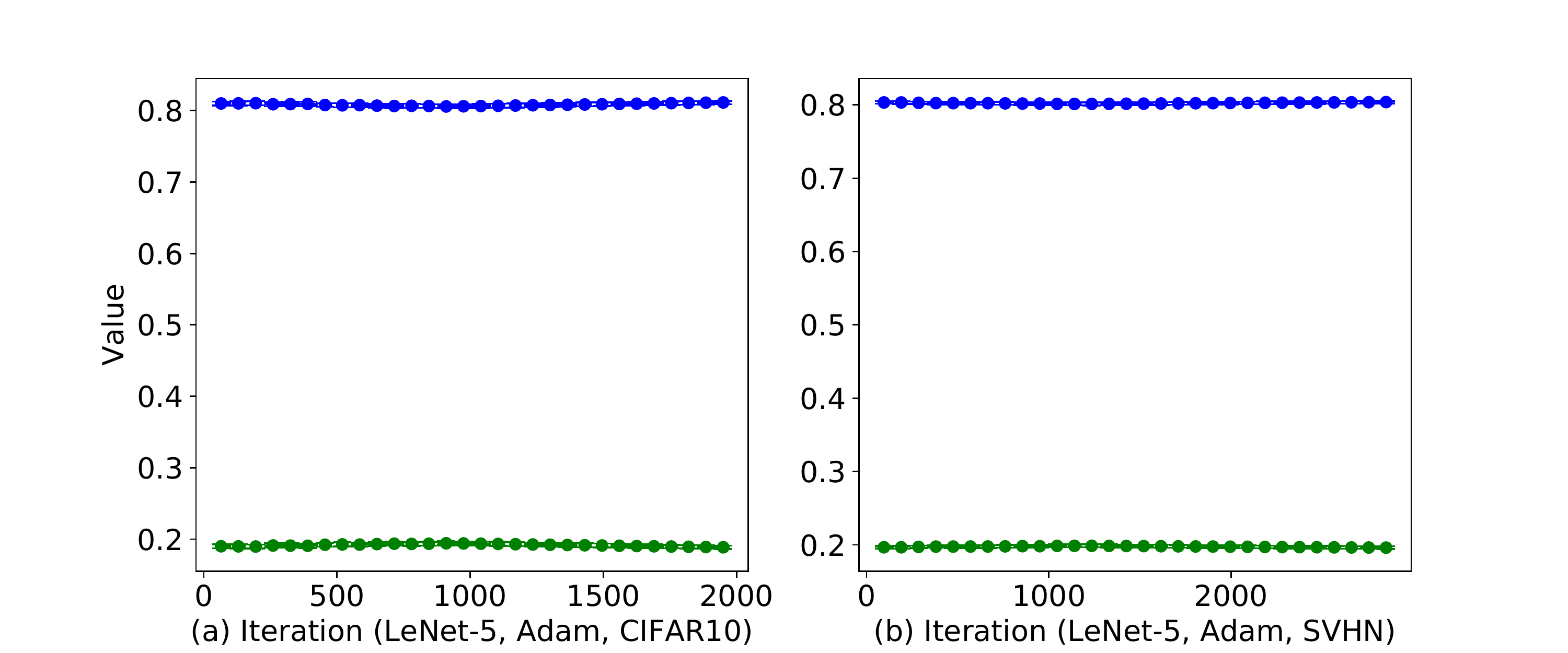}
 \caption{Learning curves of $\gamma$s for LeNet-5 with Adam-CAM-HD on CIFAR10 and SVHN ($\tau = 0.002$).} \label{Fig:7} 
\end{center} 
\end{figure}
\begin{figure}[th] 
\begin{center}
 \includegraphics[width=0.7\linewidth]{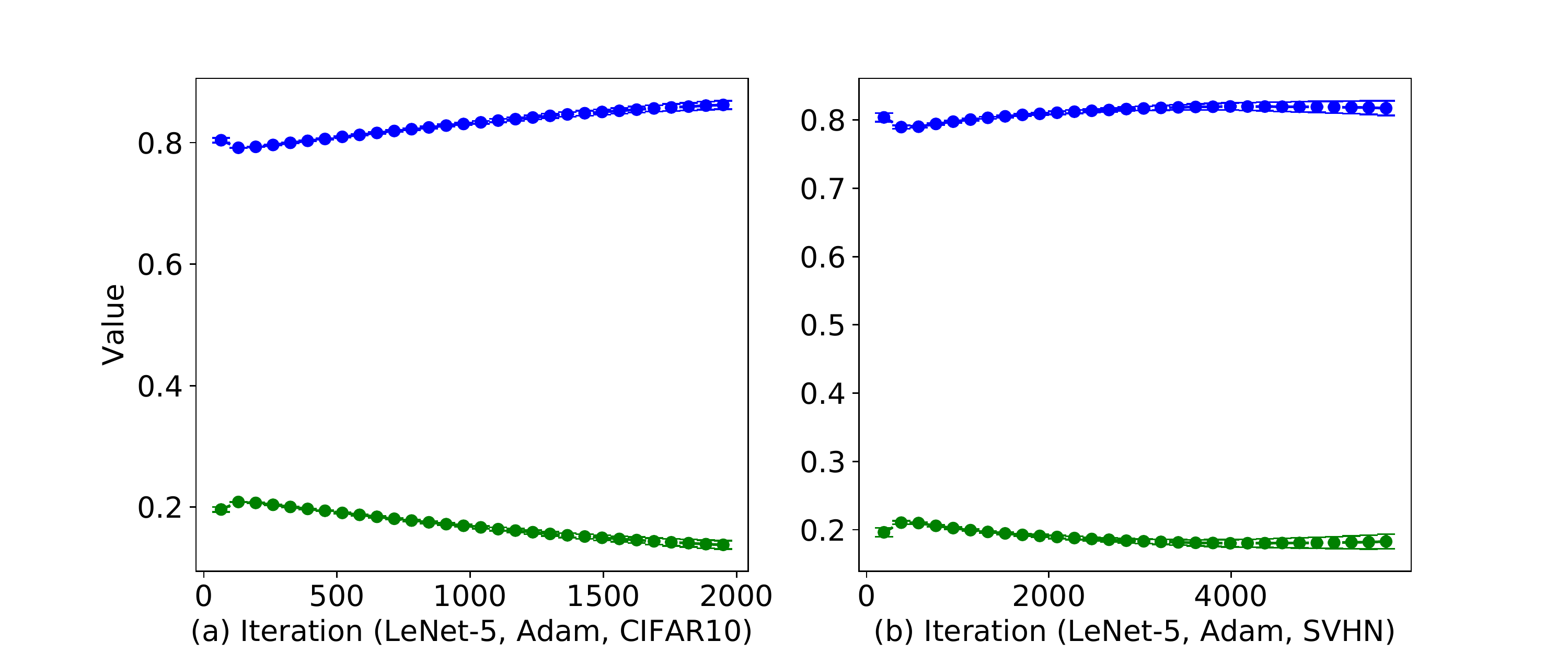}
 \caption{Learning curves of $\gamma$s for LeNet-5 with Adam-CAM-HD on CIFAR10 and SVHN ($\tau = 0.001$).} \label{Fig:7b} 
\end{center} 
\end{figure}
\begin{figure}[th] 
\begin{center}
 \includegraphics[width=0.7\linewidth]{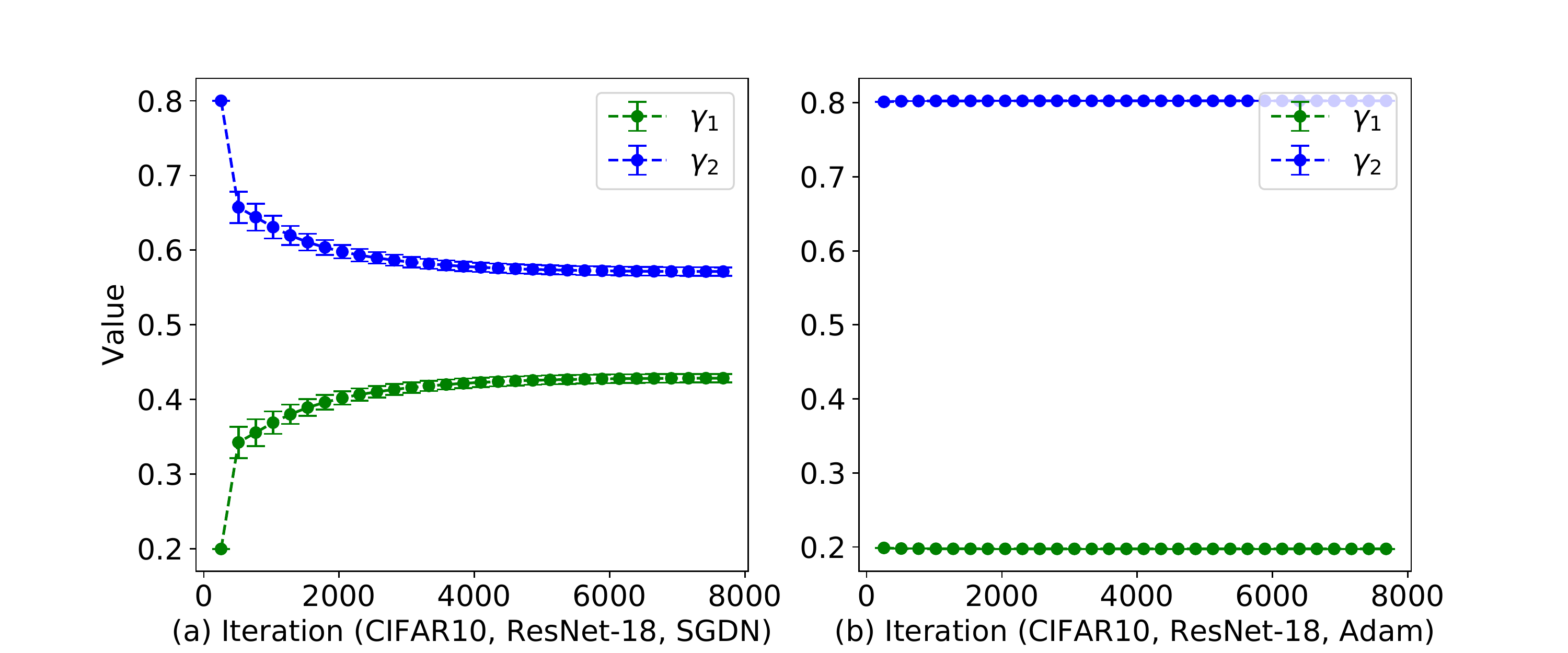}
 \caption{Learning curves of $\gamma$s for ResNet-18 with SGDN-CAM-HD and Adam-CAM-HD ($\tau = 0.001$).} \label{Fig:8} 
\end{center} 
\end{figure}
Figure~\ref{Fig:5} corresponds to the experiment of FFNN on MNIST in Section 3.3 of the main paper, which is a three-level case. We can see that for different FFNN architecture, the learning behaviors of $\gamma$s also show different patterns, although trained on a same dataset. Meanwhile, the standard errors for multiple trials are much smaller relative to the changes of the average combination weight values. 

Figure~\ref{Fig:6} corresponds to the learning curves of $\gamma$s in the experiments of LeNet-5 for MNIST image classification with SGD, SGDN and Adam, which are trained on 10\% of original training dataset. In addition, Figure~\ref{Fig:7} corresponds to the learning curves of $\gamma$s in the experiments of LeNet-5 for CIFAR10 and SVHN image classification with Adam-CAM-HD.

As is shown in Figure~\ref{Fig:6}, for SGD-CAM-HD, SGDN-CAM-HD and Adam-CAM-HD, the equilibrium values of combination weights are different from each other. Although the initialization $\gamma_1=0.2$, $\gamma_2=0.8$ and the updating rate $\delta=0.03$ are set to be the same for the three optimizers, the values of $\gamma_1$ and $\gamma_2$ only change in a small proportion when training with Adam-CAM-HD, while the change is much more significant towards larger filter/layer-wise adaptation when SGD-CAM-HD or SGDN-CAM-HD is implemented. The numerical results show that for SGDN-CAM-HD, the average value of weight for layer-wise adaptation $\gamma_1$ jumps from 0.2 to 0.336 in the first epoch, then drop back to 0.324 before keeping increasing till about 0.388. For Adam-CAM-HD, the average $\gamma_1$ moves from 0.20 to 0.211 with about 5\% change. In Figure~\ref{Fig:7}, both the two subplots are about LeNet-5 models trained with Adam-CAM-HD, while the exponential decay rate for weighted approximation is set to be $\tau = 0.002$. For the updating curves in Figure~\ref{Fig:7}(a), which is trained on CIFAR10 with Adam-CAM-HD, the combination weight for filter-wise adaptation moves from 0.20 to 0.188. Meanwhile, for the updating curves in Figure~\ref{Fig:7}(b), which is trained on SVHN, the combination weight for filter-wise adaptation moves from 0.20 to 0.195. Further exploration shows that $\tau$ has an impact on the learning curves of combination weights. As is shown by Figure~\ref{Fig:7b}, a smaller $\tau=0.001$ can result in a more significant change of combination weights during training with Adam-CAM-HD.   
The similar effect can also be observed from the learning curves of $\gamma$s for ResNet-18, which is given in Figure~\ref{Fig:8} and we only take the first 8,000 iterations. Again, we find that in training ResNet-18 on CIFAR10, the combination weights of SGD/SGDN-CAM-HD change much faster than that of Adam-CAM-HD. There are several reasons for this effect: First, in the cases when $\gamma$s do not move significantly, we apply Adam-CAM-HD, where the main learning rate (1e-3) is only about 1\%-6\% of the learning rate of SGD or SGDN (1e-1). In Algorithm 1, we can see that the updating rate of $\gamma$s is in proportion of alpha given other terms unchanged. Thus, for the same tasks, if the same value of updating rate $\delta$ is applied, the updating scale of $\gamma$s for Adam-CAM-HD can be much smaller than that for SGDN-CAM-HD. Second, this does not mean that if we apply a much larger $\delta$ for Adam-CAM-HD, the combination weights will still not change significantly or the performance will not be improved. It simply means that using a small $\delta$ can also achieve good performance due to the goodness of initialisation points. Third, it is possible that Adam requires lower level of combination ratio adaptation for the same network architecture compared with SGD/SGDN due to the fact that Adam itself involves stronger adaptiveness.

\subsection{Number of parameters and space complexity}\label{Sec:5.5.2}
The proposed adaptive optimizer is for efficiently updating the model parameters, while the final model parameters will not be increase by introducing CMA-HD optimizer. However, during the training process, several extra intermediate variables are introduced. For example, in the discussed three-level's case for feed-forward neural network with $n_{\text{layer}}$ 
layers, we need to restore $h_{p,t}$, $h_{l,t}$ and $h_{g,t}$, which have the sizes of $S(h_{p,t}) = \sum^{n_{\text{layer}}-1}_{l=1} (n_l+1) n_{l+1}$, $S(h_{l,t}) = n_{\text{layer}}$ and $S(h_{g,t}) = 1$, respectively, 
where $n_i$ is the number of units in $i$th layer. Also, learning rates $\alpha_{p,t}$, $\alpha_{l,t}$, $\alpha_{g,t}$ and take the sizes of $S(a_{p,t}) = \sum^{n_{\text{layer}}-1}_{l=1} (n_l+1) n_{l+1}$, $S(a_{l,t}) = n_{\text{layer}}$, $S(a_{g,t}) = 1$, $S(a_{g,t}) = 1$, and $S(a^{*}_{p,t}) = \sum^{n_{\text{layer}}-1}_{l=1} (n_l+1) n_{l+1}$, respectively. Also we need a small set of scalar parameters to restore $\gamma_1$, $\gamma_2$ and $\gamma_3$ and other coefficients.
\newline\newline
Consider the fact that the training the baseline models, we need to restore model parameters, corresponding gradients, as well as the intermediate gradients during the implementation of chain rule, CAM-HD will take twice of the space for storing intermediate variables in the worst case. For two-level learning rate adaptation considering global and layer-wise learning rates, the extra space complexity by CAM-HD will be one to two orders' smaller than that of baseline model during training.

\subsection{Time Complexity}\label{Sec:5.5.3}
In CMA-HD, we need to calculate gradient of loss with respect to the learning rates in each level, which are $h_{p, t}$, $h_{l, t}$ and $h_{g,t}$ in three-level's case. However, the gradient of each parameter is already known during normal model training, the extra computational cost comes from taking summations and updating the lowest-level learning rates. In general, this cost is in linear relation with the number of differentiable parameters in the original models. Here we discuss the case of feed-forward networks and convolutional networks.
\newline\newline
Recall that for feed-forward neural network the whole computational complexity is:
\begin{equation}
    T(n) = O(m\cdot n_{\text{iter}}\cdot\sum_{l=2}^{n_{\text{layer}}}n_l\cdot n_{l-1}\cdot n_{l-2})
\end{equation}
where $m$ is the number of training examples, $n_{\text{iter}}$ is the iterations of training, $n_l$ is the number of units in the $l$-th layer. On the other hand, when using three-level CAM-HD with, where the lowest level is parameter-wise, we need $n_{\text{layer}}$ element products to calculate $h_{p,t}$ for all layers, one $n_{\text{layer}}$ matrix element summations to calculate $h_{l,t}$ for all layers, as well as a list summation to calculate $h_{g,t}$. In addition, two element-wise summations will also be implemented for calculating $\alpha_{p,t}$ and $\alpha^{*}_p$. Therefore, the extra computational cost of using CAM-HD is $\Delta T(n) = O(m_b\cdot n_{\text{iter}}\sum^{n_{\text{layer}}}_{l=2} (n_l\cdot n_{l-1}+n_l))$, where $m_b$ is the number of mini-batches for training. Notice that $m/m_b$ is the batch size, which is usually larger than 100. This extra cost is more than one-order smaller than the computation complexity of training a model without learning rate adaptation. For the cases when the lowest level is layer-wise, only one element-wise matrix product is needed in each layer to calculate $h_{l,t}$. For convolutional neural networks, we have learned that the total time complexity of all convolutional layers is \citep{he2015convolutional}:
\begin{equation}
O (m \cdot n_{\text{\text{iter}}} \cdot \sum^{n_{conv\_layer}}_{l=1} (n_{l-1}\cdot s^2_l\cdot n_l \cdot m^2_l)) 
\end{equation}
where $l$ is the index of a convolutional layer, and $n_{conv\_layer}$ is the
depth (number of convolutional layers). $n_l$ is the number of filters in the $l$-th layer, while $n_{l-1}$ is known as the number of input channels of the $l$-th layer. $s_l$ is the spatial size of the filter. $m_l$ is the spatial size of the output feature map. If we consider convolutional filters as layers, the extra computational cost for CAM-HD in this case is $\Delta T(n) = O(m_b\cdot n_{\text{iter}}\sum^{n_{conv\_layer}}_{l=1} ((n_{l-1}\cdot s^2_l+1)\cdot n_l))$, which is still more than one order smaller than the cost of model without learning rate adaptation.
\newline\newline
Therefore, for large networks, applying CMA-HD will not significantly increase the computational cost from the theoretical prospective.

\section{Conclusion} \label{Sec:5.6}

In this study, we propose a gradient-based learning rate adaptation strategy by introducing hierarchical learning rate structures in deep neural networks. By considering the relationship between regularization and the combination of adaptive learning rates in multiple levels, we further propose a joint algorithm for adaptively learning each level's combination weight. It increases the adaptiveness of the hyper-gradient descent method in any single level, while over-parameterization involved in optimizers can be controlled by adaptive regularization effect. Experiments on FFNN, LeNet-5, and ResNet-18/34 indicate that the proposed methods can outperform the standard ADAM/SGDN and other baseline methods with statistical significance. 



\newpage

\bibliography{MAIN_arxiv}

\begin{thebibliography}{36}
\providecommand{\natexlab}[1]{#1}
\providecommand{\url}[1]{\texttt{#1}}
\expandafter\ifx\csname urlstyle\endcsname\relax
  \providecommand{\doi}[1]{doi: #1}\else
  \providecommand{\doi}{doi: \begingroup \urlstyle{rm}\Url}\fi

\bibitem[Almeida et~al.(1998)Almeida, Langlois, Amaral, and
  Plakhov]{almeida1998parameter}
L.~B. Almeida, T.~Langlois, J.~D. Amaral, and A.~Plakhov.
\newblock Parameter adaptation in stochastic optimization.
\newblock \emph{On-Line Learning in Neural Networks, Publications of the Newton
  Institute}, pages 111--134, 1998.

\bibitem[Andrychowicz et~al.(2016)Andrychowicz, Denil, Gomez, Hoffman, Pfau,
  Schaul, Shillingford, and De~Freitas]{andrychowicz2016learning}
M.~Andrychowicz, M.~Denil, S.~Gomez, M.~W. Hoffman, D.~Pfau, T.~Schaul,
  B.~Shillingford, and N.~De~Freitas.
\newblock Learning to learn by gradient descent by gradient descent.
\newblock In \emph{NeurIPS}, pages 3981--3989, 2016.

\bibitem[Baydin et~al.(2017)Baydin, Cornish, Rubio, Schmidt, and
  Wood]{baydin2017online}
A.~G. Baydin, R.~Cornish, D.~M. Rubio, M.~Schmidt, and F.~Wood.
\newblock Online learning rate adaptation with hypergradient descent.
\newblock \emph{ICLR}, 2017.

\bibitem[Baydin et~al.(2018)Baydin, Pearlmutter, Radul, and
  Siskind]{baydin2018automatic}
A.~G. Baydin, B.~A. Pearlmutter, A.~A. Radul, and J.~M. Siskind.
\newblock Automatic differentiation in machine learning: a survey.
\newblock \emph{JMLR}, 18\penalty0 (153), 2018.

\bibitem[Bergstra et~al.(2011)Bergstra, Bardenet, Bengio, and
  K{\'e}gl]{bergstra2011algorithms}
J.~Bergstra, R.~Bardenet, Y.~Bengio, and B.~K{\'e}gl.
\newblock Algorithms for hyper-parameter optimization.
\newblock In \emph{NeurIPS)}, volume~24. Neural Information Processing Systems
  Foundation, 2011.

\bibitem[DeVries and Taylor(2017)]{devries2017improved}
T.~DeVries and G.~W. Taylor.
\newblock Improved regularization of convolutional neural networks with cutout.
\newblock \emph{arXiv preprint arXiv:1708.04552}, 2017.

\bibitem[Duchi et~al.(2011)Duchi, Hazan, and Singer]{duchi2011adaptive}
J.~Duchi, E.~Hazan, and Y.~Singer.
\newblock Adaptive subgradient methods for online learning and stochastic
  optimization.
\newblock \emph{JMLR}, 12:\penalty0 2121--2159, 2011.

\bibitem[Feurer and Hutter(2019)]{feurer2019hyperparameter}
M.~Feurer and F.~Hutter.
\newblock Hyperparameter optimization.
\newblock In \emph{Automated Machine Learning}, pages 3--33. Springer, Cham,
  2019.

\bibitem[Fine(2006)]{fine2006feedforward}
T.~L. Fine.
\newblock \emph{Feedforward neural network methodology}.
\newblock Springer Science \& Business Media, 2006.

\bibitem[Franceschi et~al.(2017)Franceschi, Donini, Frasconi, and
  Pontil]{franceschi2017forward}
L.~Franceschi, M.~Donini, P.~Frasconi, and M.~Pontil.
\newblock Forward and reverse gradient-based hyperparameter optimization.
\newblock In \emph{ICML}, pages 1165--1173. JMLR. org, 2017.

\bibitem[Fu et~al.(2017)Fu, Ng, Chen, Ilievski, Pal, and Chua]{fu2017neural}
J.~Fu, R.~Ng, D.~Chen, I.~Ilievski, C.~Pal, and T.-S. Chua.
\newblock Neural optimizers with hypergradients for tuning parameter-wise
  learning rates.
\newblock \emph{JMLR: Workshop and Conference Proceedings}, 1:\penalty0 1--8,
  2017.

\bibitem[Ge et~al.(2019)Ge, Kakade, Kidambi, and Netrapalli]{ge2019step}
R.~Ge, S.~M. Kakade, R.~Kidambi, and P.~Netrapalli.
\newblock The step decay schedule: A near optimal, geometrically decaying
  learning rate procedure for least squares.
\newblock In \emph{Advances in Neural Information Processing Systems}, pages
  14977--14988, 2019.

\bibitem[He and Sun(2015)]{he2015convolutional}
K.~He and J.~Sun.
\newblock Convolutional neural networks at constrained time cost.
\newblock In \emph{CVPR}, pages 5353--5360, 2015.

\bibitem[He et~al.(2016)He, Zhang, Ren, and Sun]{he2016deep}
K.~He, X.~Zhang, S.~Ren, and J.~Sun.
\newblock Deep residual learning for image recognition.
\newblock In \emph{CVPR}, pages 770--778, 2016.

\bibitem[Karimi et~al.(2016)Karimi, Nutini, and Schmidt]{karimi2016linear}
H.~Karimi, J.~Nutini, and M.~Schmidt.
\newblock Linear convergence of gradient and proximal-gradient methods under
  the polyak-{\l}ojasiewicz condition.
\newblock In \emph{Joint European Conference on Machine Learning and Knowledge
  Discovery in Databases}, pages 795--811. Springer, 2016.

\bibitem[Kingma and Ba(2015)]{kingma2014adam}
D.~P. Kingma and J.~Ba.
\newblock Adam: A method for stochastic optimization.
\newblock \emph{ICLR}, 2015.

\bibitem[Krizhevsky and Hinton(2012)]{krizhevsky2009learning}
A.~Krizhevsky and G.~Hinton.
\newblock Learning multiple layers of features from tiny images.
\newblock \emph{University of Toronto}, 2012.

\bibitem[Lang et~al.(2019)Lang, Xiao, and Zhang]{lang2019using}
H.~Lang, L.~Xiao, and P.~Zhang.
\newblock Using statistics to automate stochastic optimization.
\newblock In \emph{Advances in Neural Information Processing Systems}, pages
  9540--9550, 2019.

\bibitem[LeCun et~al.(1998)LeCun, Bottou, Bengio, and
  Haffner]{lecun1998gradient}
Y.~LeCun, L.~Bottou, Y.~Bengio, and P.~Haffner.
\newblock Gradient-based learning applied to document recognition.
\newblock \emph{Proceedings of the IEEE}, 86\penalty0 (11):\penalty0
  2278--2324, 1998.

\bibitem[LeCun et~al.(2015)]{lecun2015lenet}
Y.~LeCun et~al.
\newblock Lenet-5, convolutional neural networks.
\newblock \emph{URL: http://yann. lecun. com/exdb/lenet}, 20:\penalty0 5, 2015.

\bibitem[Liu et~al.(2019)Liu, Jiang, He, Chen, Liu, Gao, and
  Han]{liu2019variance}
L.~Liu, H.~Jiang, P.~He, W.~Chen, X.~Liu, J.~Gao, and J.~Han.
\newblock On the variance of the adaptive learning rate and beyond.
\newblock In \emph{ICLR}, 2019.

\bibitem[Luo et~al.(2018)Luo, Xiong, Liu, and Sun]{luo2019adaptive}
L.~Luo, Y.~Xiong, Y.~Liu, and X.~Sun.
\newblock Adaptive gradient methods with dynamic bound of learning rate.
\newblock In \emph{ICLR}, 2018.

\bibitem[Lv et~al.(2017)Lv, Jiang, and Li]{lv2017learning}
K.~Lv, S.~Jiang, and J.~Li.
\newblock Learning gradient descent: Better generalization and longer horizons.
\newblock In \emph{ICML}, pages 2247--2255. JMLR. org, 2017.

\bibitem[Maclaurin et~al.(2015)Maclaurin, Duvenaud, and
  Adams]{maclaurin2015gradient}
D.~Maclaurin, D.~Duvenaud, and R.~Adams.
\newblock Gradient-based hyperparameter optimization through reversible
  learning.
\newblock In \emph{ICML}, pages 2113--2122, 2015.

\bibitem[Netzer et~al.(2011)Netzer, Wang, Coates, Bissacco, Wu, and
  Ng]{netzer2011reading}
Y.~Netzer, T.~Wang, A.~Coates, A.~Bissacco, B.~Wu, and A.~Y. Ng.
\newblock Reading digits in natural images with unsupervised feature learning.
\newblock In \emph{NIPS Workshop on Deep Learning and Unsupervised Feature
  Learning 2011}, 2011.

\bibitem[Prechelt(1998)]{prechelt1998early}
L.~Prechelt.
\newblock Early stopping-but when?
\newblock In \emph{Neural Networks: Tricks of the trade}, pages 55--69.
  Springer, 1998.

\bibitem[Reddi et~al.(2019)Reddi, Kale, and Kumar]{reddi2019convergence}
S.~J. Reddi, S.~Kale, and S.~Kumar.
\newblock On the convergence of adam and beyond.
\newblock In \emph{International Conference on Learning Representations}, 2019.

\bibitem[Rolinek and Martius(2018)]{rolinek2018l4}
M.~Rolinek and G.~Martius.
\newblock L4: Practical loss-based stepsize adaptation for deep learning.
\newblock In \emph{NeurIPS}, pages 6433--6443, 2018.

\bibitem[Ruder(2016)]{ruder2016overview}
S.~Ruder.
\newblock An overview of gradient descent optimization algorithms.
\newblock \emph{arXiv:1609.04747}, 2016.

\bibitem[Savarese(2019)]{savarese2019convergence}
P.~Savarese.
\newblock On the convergence of adabound and its connection to sgd.
\newblock \emph{arXiv:1908.04457}, 2019.

\bibitem[Subramanian(2018)]{subramanian2018deep}
V.~Subramanian.
\newblock \emph{Deep Learning with PyTorch: A practical approach to building
  neural network models using PyTorch}.
\newblock Packt Publishing Ltd, 2018.

\bibitem[Sun(2019)]{sun2019optimization}
R.~Sun.
\newblock Optimization for deep learning: theory and algorithms.
\newblock \emph{arXiv:1912.08957}, 2019.

\bibitem[Svozil et~al.(1997)Svozil, Kvasnicka, and
  Pospichal]{svozil1997introduction}
D.~Svozil, V.~Kvasnicka, and J.~Pospichal.
\newblock Introduction to multi-layer feed-forward neural networks.
\newblock \emph{Chemometrics and intelligent laboratory systems}, 39\penalty0
  (1):\penalty0 43--62, 1997.

\bibitem[Tieleman and Hinton(2012)]{tieleman2012rmsprop}
T.~Tieleman and G.~Hinton.
\newblock Rmsprop: Divide the gradient by a running average of its recent
  magnitude. coursera: Neural networks for machine learning.
\newblock \emph{Tech. Rep., Technical report}, page~31, 2012.

\bibitem[Wichrowska et~al.(2017)Wichrowska, Maheswaranathan, Hoffman,
  Colmenarejo, Denil, de~Freitas, and Sohl-Dickstein]{wichrowska2017learned}
O.~Wichrowska, N.~Maheswaranathan, M.~W. Hoffman, S.~G. Colmenarejo, M.~Denil,
  N.~de~Freitas, and J.~Sohl-Dickstein.
\newblock Learned optimizers that scale and generalize.
\newblock In \emph{ICML}, pages 3751--3760. JMLR. org, 2017.

\bibitem[Zhang et~al.(2019)Zhang, Lucas, Ba, and Hinton]{zhang2019lookahead}
M.~Zhang, J.~Lucas, J.~Ba, and G.~E. Hinton.
\newblock Lookahead optimizer: k steps forward, 1 step back.
\newblock In \emph{NeurIPS}, pages 9593--9604, 2019.

\end{thebibliography}

\end{document}